\documentclass[sigconf]{acmart}

\AtBeginDocument{%
	}

\setcopyright{acmcopyright}
\copyrightyear{2018}
\acmYear{2018}
\acmDOI{XXXXXXX.XXXXXXX}

\acmConference[Conference acronym 'XX]{Make sure to enter the correct
	conference title from your rights confirmation emai}{June 03--05,
	2018}{Woodstock, NY}
\acmPrice{15.00}
\acmISBN{978-1-4503-XXXX-X/18/06}

\newcommand{\Ber}{\mathrm{Ber}}

\newenvironment{lemmaprime}[1]
{%
	\addtocounter{theorem}{-1}%
	\begin{lemma}}
	{\end{lemma}}

\usepackage{balance}



\DeclareMathOperator{\bE}{{\mathop{\mathbf{E}}}}

\newcommand{\E}{\mathcal{E}}

\newcommand{\abs}[1]{\left|{#1}\right|}

\newcommand{\A}{\mathcal{A}}

\newcommand{\W}{\mathcal{H}}

\newcommand{\D}{\mathcal{D}}

\newtheorem{theorem}{Theorem}
\newtheorem{lemma}[theorem]{Lemma}
\newtheorem{fact}[theorem]{Fact}
\newtheorem{claim}[theorem]{Claim}
 \newtheorem{remark}[theorem]{Remark}

 \usepackage[ruled, vlined]{algorithm2e}


\newcommand{\fullversion}{}

\author{Nikolai Karpov}
\affiliation{%
	\institution{Indiana University}
	\city{Bloomington} 
	\state{IN} 
	\postcode{47408}
	\country{USA}
}
\email{nkarpov@iu.edu}

\author{Qin Zhang}
\affiliation{%
	\institution{Indiana University}
	\city{Bloomington} 
	\state{IN} 
	\postcode{47408}
	\country{USA}
}
\email{qzhangcs@iu.edu}
\authornote{Corresponding author.}

\ifdefined\fullversion
\else
\copyrightyear{2024}
\acmYear{2024}
\setcopyright{acmlicensed}\acmConference[SPAA '24]{Proceedings of the 36th ACM Symposium on Parallelism in Algorithms and Architectures}{June 17--21, 2024}{Nantes, France}
\acmBooktitle{Proceedings of the 36th ACM Symposium on Parallelism in Algorithms and Architectures (SPAA '24), June 17--21, 2024, Nantes, France}
\acmDOI{10.1145/3626183.3659957}
\acmISBN{979-8-4007-0416-1/24/06}
\fi

\begin{document}

\title{Parallel Best Arm Identification in Heterogeneous Environments}
\titlenote{Authors are supported in part by NSF CCF-1844234 and CCF-2006591.}


\begin{abstract}
	In this paper, we study the tradeoffs between the {\em time} and the {\em number of communication rounds} of the best arm identification problem in the heterogeneous collaborative learning model, where multiple agents interact with possibly different environments and they want to learn in parallel an objective function in the aggregated environment.  By proving almost tight upper and lower bounds, we show that collaborative learning in the heterogeneous setting is inherently more difficult than that in the homogeneous setting in terms of the time-round tradeoff.
\end{abstract}



\ifdefined\fullversion
\else
\begin{CCSXML}
	<ccs2012>
	<concept>
	<concept_id>10003752.10003777.10003780</concept_id>
	<concept_desc>Theory of computation~Communication complexity</concept_desc>
	<concept_significance>500</concept_significance>
	</concept>
	<concept>
	<concept_id>10010147.10010257.10010258.10010261.10010275</concept_id>
	<concept_desc>Computing methodologies~Multi-agent reinforcement learning</concept_desc>
	<concept_significance>500</concept_significance>
	</concept>
	</ccs2012>
\end{CCSXML}

\ccsdesc[500]{Theory of computation~Communication complexity}
\ccsdesc[500]{Computing methodologies~Multi-agent reinforcement learning}
\fi

\keywords{parallel learning; communication complexity; best arm identification; heterogeneous environments}

\received{20 February 2007}
\received[revised]{12 March 2009}
\received[accepted]{5 June 2009}

\maketitle

\section{Introduction}
\label{sec:intro}

As data continue to grow, multi-agent learning has emerged as an important direction in scalable machine learning and has attracted much attention under the name of {\em federated learning}~\citep{KMR15,KMRR16,MMR+17}, where multiple agents try to learn an objective function in parallel via communication.  While the majority of work in federated learning focuses on the distributed training of neural networks, a few papers \citep{TZZ19,KZZ20,WHCW20} studied parallel reinforcement learning problems in a very similar model named the {\em collaborative learning}  (CL) model. However, most work in the literature of collaborative learning only considered the {\em homogeneous} environment (or, IID data), in which agents interact with the same data distribution. Real world applications of multi-agent learning often involve {\em heterogeneous} environments (or, non-IID data), in which agents interact with possibly different data distributions.\footnote{We felt that the words ``IID/non-IID", which are  widely used in the literature of federated learning, are somewhat confusing. In the rest of this paper, we will use the words ``homogeneous" and ``heterogeneous" to denote the scenarios where agents interact with identical and different data distributions, respectively.} Indeed, heterogeneous environments have been identified as a key feature of the federated learning model \citep{corr/abs-1912-04977}.

In this paper, we investigate heterogeneous collaborative learning. We will use a basic problem in bandit theory named {\em best arm identification in multi-armed bandits} (BAI) as a vehicle to deliver the following message: {\em  Collaborative learning in the heterogeneous environment is provably more difficult than that in the homogeneous environment w.r.t.\ communication rounds.}

In the following, we first introduce the BAI problem and the CL model, and then summarize our results and contributions.  We conclude the section by discussing related work.

\vspace{2mm}
\noindent{\bf Best Arm Identification in Multi-Armed Bandits.\ \ }   In BAI, we have $n$ arms, each of which is associated with an unknown distribution $\D_i\ (i \in [n])$ with support $[0, 1]$.\footnote{We use $[n]$ to denote $\{1, 2, \ldots, n\}$.}
We aim at identifying the arm whose associated distribution has the largest mean by a sequence of $T$ pulls. In each arm pull, we choose an arm based on the previous pulls and outcomes, and obtain a sample from the arm's associated distribution.  Assuming that each pull takes unit time, we call $T$ the {\em time horizon}. The goal of BAI is to identify the arm with the highest mean with the smallest error probability under time horizon $T$.\footnote{For readers who are familiar with the bandit literature, we are considering the {\em fixed-time/budget} best arm identification. Another version of this problem is called  {\em fixed-confidence}, where we want to solve BAI with a fixed error probability $\delta$ with the smallest number of pulls.}

BAI is a basic problem in bandit theory and reinforcement learning, and has been studied extensively in the literature since 1950s (e.g., \citep{Bechhofer58,Paulson64,EMM02,EMM06,ABM10,KKS13,JMNB14,CLQ17}).   The problem has numerous real-world applications, including clinical trials, article/ad/channel selection, computer game play, financial portfolio design, adaptive routing, crowd-sourced ranking, hyperparameter optimization, etc.

Let $I  = \{1, 2, \ldots, n\}$ be an input instance of $n$ arms.
W.l.o.g., we assume that there is a unique best arm, which is denoted by $i_*$.  Let $\mu_*$ be the mean of $\D_{i_*}$, and for any $i \in [n]$, let $\mu_i$ be the mean of $\D_i$.   Let $\Delta_i = \mu_* - \mu_i$ be the {\em mean gap} of the best arm and the $i$-th  arm.
The {\em instance complexity} of BAI on input  $I$ is defined as:
\begin{equation}
	\label{eq:instance-complexity}
	H(I) \triangleq \sum_{i \in [n], i \neq i_*} {1}/{\Delta_i^2}.
\end{equation}
Intuitively, the term $1/\Delta_i^2$ is the number of pulls needed to separate the best arm and the $i$-th  arm with a good probability.   We will sometimes write $H \triangleq H(I)$ for convenience. It is known that under time horizon $\tilde{O}(H)$, there exists a centralized algorithm that solves BAI with probability $0.99$ \citep{ABM10}.\footnote{For the convenience of presentation, we sometimes use `$\tilde{\ \ }$' on $O, \Omega, \Theta$ to hide non-critical logarithmic factors. 
	All logarithmic factors will be spelled out in our theorems explicitly.
} On the other hand, no centralized algorithm can solve the BAI problem with probability $0.99$ under time horizon $H$ \citep{CL16}. 

\vspace{2mm}
\noindent{\bf The Collaborative Learning Model.\ \ }
Most study for BAI has been done in the centralized model, in which just one agent pulls the set of arms sequentially.  \cite{TZZ19,KZZ20} studied BAI in the collaborative learning (CL) model, where there are $K$ agents, who try to learn the best arm in parallel via communication.   The learning proceeds in rounds. 
In each round, each agent takes a sequence of pulls (one at each time step) and observes the outcomes.  At the end of each round there is a communication phase; the agents communicate with each other to exchange newly observed information and determine the number of time steps for the next round (the length of the first round is determined at the beginning of the first round). At the end of the last round, all agents have to output the same answer without any further communication.  The goal of BAI in the CL model is for all agents to output the correct answer with the smallest error probability under time horizon $T$ (i.e., the number of time steps over all rounds) and the number of rounds $R$. Note that the number of communication phases is $(R - 1)$, since we do not allow any communication at the end of the last round.

Depending on whether the agents have real-time computing and policy-updating ability, the CL algorithms are divided into two categories: {\em adaptive} and {\em non-adaptive}. In the adaptive case, agents can change their pull policies at each time step based on new observations. While in the non-adaptive case, policy updates can only happen at the beginning of each round. 
In this paper, we focus on the adaptive case; the lower bound proof for the adaptive case is more challenging than the non-adaptive case due to agents' local adaptivity within a round.

Minimizing communication in the CL model is critical due to network bandwidth constraints and latency, energy consumption (think of deep-sea/outer-space exploration), and data usage (e.g., if messages are sent by mobile devices). In this paper, we mainly focus on  the round complexity. Like parallel/distributed computation models such as MapReduce, initiating a new round of learning process can be very expensive due to various communication overheads.  
The communication cost (i.e., the total number of bits exchanged between agents) of our algorithm is optimal up to a logarithmic factor based on a recent lower bound result in \citep{KZ23} (see Remark~\ref{rem:ub-cc}).

\vspace{2mm}
\noindent{\bf Heterogeneous Environments.\ \ }
In the CL model studied by \cite{TZZ19} and \cite{KZZ20}, each agent interacts with the same environment; for the BAI problem in particular, by pulling the same arm, the agents sample from the same data distribution. However, as mentioned earlier, heterogeneous environments are inherent in many real-world collaborative learning applications.  

For example, in the setting of channel selection in cognitive radio networks, a base station utilizes a number of mobile devices (e.g., cell phones) to select the best channel for data transformation in a particular area. Here each mobile device represents an agent and each channel represents an arm.  At each time step, an agent selects a channel and attempts to transmit a message. If the message is successfully delivered, the agent receives a reward of $1$; otherwise, the reward is $0$. This corresponds to the bandit setting.  Since mobile devices sit at different geographic locations, the channel availability distributions they observe may be very different.   The base station needs to identify the best arm with respect to the {\em aggregation} of local channel availability distributions.  
Another example is the task of item-selection in recommendation systems, where a group of servers work together to learn the globally most popular item via communication, while each server can only interact with users in a certain region (and thus get samples from a distinct data distribution).

In BAI with heterogeneous environments, by pulling the same arm, the agents sample from possibly different distributions.  Let $\pi_{i,k}$ be the distribution associated with the $i$-th arm that the $k$-th agent samples from, and let $\mu_{i,k}$ be the mean of $\pi_{i,k}$. Define the global mean of the $i$-th arm as
\begin{equation}
	\label{eq:unweighted}
	\mu_i \triangleq \frac{1}{K} \sum_{k \in [K]}\mu_{i,k}.
\end{equation}
Our task is to identify the arm $i_*$ with the largest global mean, while each agent $k \in [K]$ can only pull each arm $i \in [n]$ under its local distribution $\pi_{i,k}$.  In the heterogeneous setting, we define mean gaps based on the global means, that is, $\Delta_i = \mu_* - \mu_i$, and the instance complexity $H$ again as $\sum_{i \in [n], i \neq i_*} {1}/{\Delta_i^2}$.

\vspace{2mm}
\noindent{\bf Our Results.\ \ }
The main result of this paper is the following impossibility result.  

\begin{theorem}[Main Theorem]
	\label{thm:lb-main}
	For any $1 \le R \le \frac{\log n}{24\log\log n}$ and any $T < H n^{\Omega\left(\frac{1}{R}\right)}/K$, any $R$-round $T$-time $K$-agent algorithm that solves $n$-arm BAI  in the heterogeneous CL model  has a success probability less than $0.99$.
\end{theorem}

We complement the impossibility result by the following algorithmic result.  
\begin{theorem}
	\label{thm:ub-main}
	For any $R \ge 1$ and any $T \ge c_T H n^{\frac{1}{R}} / K$ for a universal constant $c_T$, there exists a $R$-round $T$-time $K$-agent algorithm that solves $n$-arm BAI in the heterogeneous CL model with probability $0.99$.
\end{theorem}

We note that  for a fixed time budget, the number of rounds $R$ in the lower and upper bounds in Theorem~\ref{thm:lb-main} and Theorem~\ref{thm:ub-main} match up to a constant factor.  

We would like to highlight a couple of points regarding Theorem~\ref{thm:lb-main}. First, this is the first lower bound result that addresses the {\em local agent adaptivity} in the CL models.  In particular, it shows that the capacity of each agent to utilize newly observed information within each round does {\em not} contribute to reducing the round complexity in the heterogeneous CL model.  This is in stark contrast with the homogeneous CL model in which local agent adaptivity can significantly reduce the round complexity.  Second, our hard input distribution for proving Theorem~\ref{thm:lb-main} is the first one that uses {\em asymmetric} arm means constructions. It exploits the heterogeneous property, enabling us to establish a higher lower bound for BAI than the one presented in the homogeneous setting \citep{TZZ19}. We will give a more detailed technical overview in Section~\ref{sec:overview}.

\subsection{Related Work}
\label{sec:related}

We summarize previous work that is closely related to this paper in the CL model, and refer readers to the book by \cite{LS20} for an overview on BAI in the centralized model.

The (homogeneous) CL model was first used in the work \cite{HKK+13} for studying multi-agent BAI, but the model was not formally defined there. The results for fixed-time BAI in \cite{HKK+13} only consider the special case where there is only one communication phase (i.e., $R = 2$).  The CL model was rigorously formulated in \cite{TZZ19}, where the authors obtained almost tight tradeoffs between the learning time and the round cost for BAI.  The followup work \citep{KZZ20} extended this line of research to the top-$m$ arm identifications problem.  \cite{WHCW20} studied regret minimization in multi-armed bandits in essentially the same model, but it focused on the total bits of communication exchanged between the agents (or, the communication cost) instead of the number of rounds. Recently, \cite{KZ23} studied the tradeoff between the learning time and the communication cost in the CL model for BAI, and  \cite{ALGY22} studied linear bandits in a similar setting.

The authors of \cite{RVK22} studied BAI and regret minimization in multi-armed bandits in a model similar to the CL model, but mainly in the fixed-confidence setting.  That is, their algorithm takes a confidence parameter $\delta$ (instead of a time horizon $T$) as an input, and try to use the smallest possible number of time steps to identify the best arm with probability $(1 - \delta)$. Their lower bound results are proved for the setting that agents can communicate at each time step.

In the heterogeneous CL model, \cite{SS21,SSY21} studied regret minimization in multi-armed bandits.  The authors considered the communication cost  of the CL algorithms, but the cost has been embedded into the regret formulation.  \cite{MHP21} studied BAI in the CL model where arms are partitioned into groups, and each agent can only pull arms from one particular group. This model can be thought as a special case of the heterogeneous CL model studied in this paper, where for any arm $i \in [n]$, there exists a unique agent $k \in [K]$ such that $\mu_{i,k} > 0$, while $\mu_{i,k'} = 0$ for all $k' \in [K] \backslash \{k\}$. This special case does {\em not} capture the inherent difficulty of the heterogeneous CL model where the information about a particular arm can spread over multiple agents, and their results cannot be generalized to the heterogeneous CL  model.

\section{Technical Overview of The Main Result}
\label{sec:overview}

Before delving into the full proof of our main result (Theorem~\ref{thm:lb-main}), which is very technical, we would like to provide an overview.

We note that all the parameters used in this technical overview are merely for the illustration purpose. They may {\em not} correspond to the actual, typically more complex, parameters used in the actual proof.  We will also frequently {\em ignore lower-order logarithmic terms} for the sake of readability.

\vspace{2mm}
\noindent{\bf Generalized Round Elimination and Challenges.\ \ }
Let us start by briefly illustrating the generalized round elimination technique introduced in \cite{TZZ19}, and then explain the challenges in applying it in the heterogeneous setting.  

Generalized round elimination can be thought as an induction on a sequence  of hard distribution classes $\D_0, \D_1, \ldots, \D_R$, where $\D_0 = \{\phi\}$ consists of the original hard input distribution $\phi$.\footnote{We need to use input {\em distribution} instead of a single hard input instance because we are proving a lower bound for randomized algorithms.  By Yao's minimax lemma~\citep{Yao77}, we can instead prove a lower bound for deterministic algorithms on a hard input distribution.}  At the $i$-th induction step, we show that for any input distribution in $\sigma \in \D_{i-1}$, if the agents do not conduct enough {\em non-adaptive} pulls (due to the time budget constraint) in a round, then after some ``input massage" which will only make the problem easier, the posterior distribution $\sigma'$ belongs to $\D_i$.  For the base case, we show that {\em no} $0$-round CL algorithm can solve the problem for any distribution $\sigma \in \D_R$ with a non-trivial success probability.  We can thus prove that {\em no} $R$-round algorithm for solving the problem on the original input distribution $\phi$ with  a non-trivial success probability.

The lower bound proof using generalized round elimination in \cite{TZZ19} was carried out on non-adaptive algorithms in the homogeneous CL setting.  For adaptive algorithms, only in the case when $n \le K$ (that is, the number of arms is no more than the number of agents), we can show that adaptive pulls do {\em not} have much advantage against non-adaptive pulls via a coupling argument.  This is why in \cite{TZZ19}, only a  $\Omega\left(\frac{\log \min\{K, n\}}{\log\log \min\{K, n\}}\right)$ round lower bound can be proved for adaptive algorithms.  As mentioned, for the heterogeneous CL setting, our goal is to prove an ${\Omega}\left(\frac{\log n}{\log\log n}\right)$ round lower bound for adaptive algorithms for any value $n$. To this end, we must design a new, harder input distribution that leverages the heterogeneous property of the data distributions.

\vspace{2mm}
\noindent{\bf An Interleaved Local Mean Construction.\ \ }
Our new input distribution for heterogeneous data is easier to visualize with two agents, Alice and Bob, but it can easily be extended to multiple agents.

We will focus on the ${\Omega}(\log n)$ round case (ignoring a $\log\log$ factor), while our lower bound result covers the entire time-round tradeoff.    The formal definition of our hard input distribution and its properties can be found in Section~\ref{sec:hard-distribution}.

Our hard input distribution has $L = \Theta(\log n)$ terms, with odd terms held by Alice and even terms held by Bob. The global mean of each arm can be written as
\begin{equation*}
	\label{eq:global-mean}
	\mu = \frac{1}{2} + \sum_{\ell=1}^L \frac{X_\ell}{4^\ell},
\end{equation*}
where $X_1, \ldots, X_L \in \{0, 1\}$ are i.i.d.\ Bernoulli random variables with mean $\frac{1}{2}$.  When $X_1 = \cdots = X_L = 1$, $\mu$ achieves its maximum possible value. The local mean of each arm at Alice's side is
$$\mu^A = \frac{1}{2} + \sum_{\ell:1 \le 2\ell+1 \le L} \frac{2X_{2\ell+1}}{4^{2\ell+1}},$$
and that at Bob's side is
$$\mu^B = \frac{1}{2} +  \sum_{\ell:1 \le 2\ell \le L} \frac{2X_{2\ell}}{4^{2\ell}}.$$
Note that $\mu = (\mu^A + \mu^B)/2$.  Let $\pi$, $\pi^A$ and $\pi^B$ be the underlying distributions of $\mu$, $\mu^A$, and $\mu^B$.

\vspace{2mm}
\noindent{\bf Proof Intuition and New Challenges.\ \ }
We say an arm is at level $\ell$ if $X_1 = \ldots = X_\ell = 1$ and $X_{\ell+1} = 0$ (if $\ell < L$).
The high-level intuition of proving an $\Omega(L) (= \Omega(\log n))$ lower bound is that Alice and Bob must learn the set of $n$ arms level by level under a time budget $\tilde{O}(H)$, where $H$ is the input instance complexity.  That is, at the end of the $\ell$-th round, they can only identify and eliminate those arms that are in the first $\ell$ levels, while for the remaining arms the uncertainty is still large.  As a result, they need $L = \Omega(\log n)$ rounds to identify the best arm.  Ideally, we hope to show that at each odd round $\ell$, Alice is able to identify and eliminate those arms who are in level $\ell$ but not higher, while Bob is {\em not} able to do much as he lacks information about $X_{\ell}$ of each arm. And a similar situation holds at each even round $\ell$ with Alice and Bob's positions swapped.

The difficulty in formalizing the above  intuition is that it is actually {\em possible} for each party to learn information about the bits (i.e., the $X_i$'s) at {\em all levels} using their local samples and messages received from the other party. What we need to show is that this information is {\em not} enough to allow parties to ``jump" $\omega(1)$ levels after each round given the total sample budget.

We try to formalize this intuition using generalized round elimination.  There are two challenges in proving a $\Omega(\log n)$ round lower bound for BAI in the heterogeneous CL model. 
\begin{enumerate}
	\item Explicit forms of distribution classes like those used in \cite{TZZ19} in the homogeneous setting are difficult to obtain  in the heterogeneous setting due to the intricate structures of the hard input distributions $\mu^A$ and $\mu^B$.  
	
	\item Since the coupling argument which reduces adaptive CL algorithms to non-adaptive CL algorithms is inapplicable when $n > K$, we have to prove the lower bound for adaptive CL algorithms directly.
\end{enumerate}

In the following, we briefly illustrate how we address these two challenges.

\vspace{2mm}
\noindent{\bf Implicit Forms of Distribution Classes.\ \ }
Our first technical innovation is that we implicitly define the classes of distributions for the generalized round elimination by quantifying the relationship between each distribution in the class and the original hard input distribution.   The discussion below is again a simplified version of the actual construction, whose details can be found in Section~\ref{sec:class-distribution}.

The distribution classes for Alice and Bob are defined in a similar way.  Here we use Alice for example, and define the distribution classes $\D_\ell^A\ (\ell = 0, 1, \ldots)$ for Alice.  The combined distribution class will be denoted by $\D_\ell = (\D_\ell^A, \D_\ell^B)$, where $\D_\ell^B$ is the one defined for Bob.   

Let $S_\ell^A$ be the set of all possible local means at Alice's side for arms in levels $\ell, \ldots, L$, and let $\varsigma = n^{1/L}$.  For each level $\ell = 0, 1, \ldots, L$, define input class $\D_\ell^A$ to be the set of distributions $\sigma^A$ with support $S_\ell^A$ such that
\begin{equation}
	\label{eq:ratio}
	\forall x, y \in S_\ell^A: \frac{\Pr_{\mu^A \sim \sigma^A}[\mu^A = x]}{\Pr_{\mu^A \sim \sigma^A}[\mu^A = y]} = \frac{\Pr_{\mu^A \sim \pi^A}[\mu^A = x]}{\Pr_{\mu^A \sim \pi^A}[\mu^A = y]} \cdot e^{\pm \frac{\ell}{\varsigma}}.
\end{equation}
Note that $\D_0^A = \{\pi^A\}$ where $\pi^A$ is the original input distribution at Alice's side.  Intuitively, Equation~(\ref{eq:ratio}) states that for any distribution $\sigma^A \in \D_\ell^A$, the ratio between the probability mass on any two possible mean values in $\sigma^A$ is close to that in the original input distribution $\pi^A$.  Consequently, if the original input distribution $\pi^A$ is quite ``uncertain", then any distribution $\sigma^A \in \D_\ell^A$ is also quite uncertain.  The extra $e^{\pm \frac{\ell}{\varsigma}}$ is a relaxation term that counts the influence of the pull outcomes in the first $\ell$ rounds on the posterior distribution of $\pi^A$.

We have the following lemma.
Its formal statement can be found in Lemma~\ref{lem:distribution-class} in Section~\ref{sec:class-distribution}.  We slightly rewrite and simplify the statement here for the illustration purpose.
\begin{lemma}
	\label{lem:distribution-class-simplied}
	For any $\ell \in \{0, 1, \ldots, L-1\}$, any distribution $\sigma^A \in \D_\ell^A$, and any good sequence of pull outcomes $\theta = (\theta_1, \ldots, \theta_q)$ in the current round, the posterior distribution of $\sigma^A$ after observing a sequence of pull outcomes being $\theta$ and conditioning on the mean of the arm $\mu^A \in S_{\ell+1}^A$, denoted by $(\sigma^A\ |\ \theta, \mu^A \in S_{\ell+1}^A)$, belongs to the distribution class $\D_{\ell+1}^A$.
\end{lemma}

On the other hand, we can also show that the pull sequence $\theta$ is good with high probability if its length is not too large, which holds if there is a  time budget constraint.  

Lemma~\ref{lem:distribution-class-simplied} helps in establishing the foundation of the induction in the round elimination without having to go through the spelling of the posterior distributions after a round of arm pulls.

\vspace{2mm}
\noindent{\bf A Lower Bound  for Adaptive CL Algorithms.\ \ }  Our second technical contribution is to prove the lower bound for adaptive CL algorithms directly, instead of via a reduction from a lower bound for non-adaptive CL algorithms.  The details can be found in Section~\ref{sec:lb-two}.

Let us first recall the proof for non-adaptive algorithms in \cite{TZZ19}. After the first round of pulls, we set a threshold $\eta$ and {\em publish} those arms who have been pulled more than $\eta$ times in the first round; we call these arms the {\em heavy arms}.  By publishing an arm we mean revealing its mean to all agents; note that this will only make the problem easier, and consequently make the lower bound proof stronger.  This arm publishing procedure is what we formerly referred to as the `` input massage".

We use the arm publishing procedure to ensure that the means of remaining arms belong to the next class (i.e., $\mu^A \in S_{\ell+1}^A$).  For the set of distribution classes $\D_\ell^A\ (\ell = 0, 1, \ldots)$  used in this paper,  we can use Lemma~\ref{lem:distribution-class-simplied} to show that the posterior distribution of some $\sigma^A \in \D_\ell^A$, after the publishing procedure, belongs to the next distribution class $\D_{\ell+1}^A$. 

In order for the induction to proceed, we need to make sure that if we publish all heavy arms, the probability of the best arm being published is small, since otherwise the problem would already be solved and the round elimination process {\em cannot} continue.  This is easy to do with non-adaptive algorithms, because the whole pull sequence and consequently the set of heavy arms are determined at the beginning of each round.  If the time budget is small, then the number of heavy arms must be small. Consequently, the probability that the set of heavy arms contain the best arm is also small, because all arms are almost equally uncertain at the beginning of each round. In other words, the set of heavy arms would be an almost {\em random subset} of all arms.  

Adaptive algorithms, however, can utilize their adaptivity to look for arms with high means and make more pulls on those arms.  

To handle this challenge, we choose to explicitly analyze for each heavy arm its probability of being the best arm after the first round of pulls, and then show that the sum of these probabilities is small.  This analysis is much more complicated than that for the non-adaptive algorithms. We try to illustrate the main ideas below.

The key to the analysis for each individual arm is that, because of the interleaved mean structure, Alice misses most information of half of the terms held by Bob. Without this information, her adaptivity cannot help much in the task of identifying which arm is more likely to be the best arm.  On the other hand, the time budget constraint also prevents Alice from extracting and revealing to Bob too much information about her local means of arms which are {\em not} published  in the next round (see the algorithm {\em Arm Publishing and Additional Pulls} in Section~\ref{sec:induction} for details on how we publish arms). A similar argument holds for Bob. Despite appearing natural, it is highly non-trivial to put this intuition into a formal proof since we need to carefully bound the ``help" of the historical information exchange. The adaptivity of the algorithm further complicates the description of the posterior distribution of the arms after one round of pulls.  Fortunately, our implicit representation of the distribution classes is flexible enough to handle this additional complexity.

Finally, we would like to mention that due to technical needs, in each step of our induction we have to ``consume'' multiple, but still $O(1)$, levels out of the $L$ levels of arms, but this will {\em not} change the asymptotic round bound.

\vspace{2mm}
\noindent{\bf Generalizing to $K$ Parties.\ \ }  Finally, we comment that we can easily generalize the lower bound for $2$ agents to $K$ agents via a reduction.  See Section~\ref{sec:general-K} for details.

\section{The Impossibility Result}
\label{sec:lb}

In this section, we give the proof to Theorem~\ref{thm:lb-main}.

We start with the case when there are two agents (i.e., $K = 2$), and then generalize the results to all $K$.  
Below are a few notations that we will be using in this section.
\begin{itemize}
	\item $R$: The number of rounds used by the algorithm.  We will focus on the range $1 \le R \le \frac{\log n}{24 \log\log n}$.
	
	\item $L \triangleq 6R$: The number of terms in the means of arms in the hard input distribution. 
	
	\item $\eta \triangleq n^{\frac{1}{2L}} = n^{\frac{1}{12R}}$: Intuitively, it is the ratio between the maximum contributions of consecutive terms in the mean construction.  For $1 \le R \le \frac{\log n}{24 \log\log n}$, we always have $\eta \ge \log^2 n$.
	
	\item  $\zeta \triangleq \frac{\sqrt{\eta}}{2^7} = \frac{n^{\frac{1}{24R}}}{2^7}$: A parameter related to the time of the CL algorithm.
	
	\item $\gamma \triangleq \frac{\eta}{2^7} = \frac{n^{\frac{1}{12R}}}{2^7} = \Theta(\zeta^2)$: A parameter for the convenience of the presentation.
	
	\item $\Ber(\mu)$ denotes the Bernoulli distribution with mean $\mu$.
\end{itemize}

For convenience, when we write $c = a \pm b$ (or $c \pm d = a \pm b$), we mean $c \in [a-b, a+b]$ (or  $[c-d, c+d] \subseteq [a-b, a+b]$). Without this simplification, some formulas may be difficult to read.

We will use the following standard concentration bound.
\begin{lemma}[Chernoff-Hoeffding Inequality]
	\label{lem:chernoff}
	Let $X_1, \dots, X_n \in [a_i, b_i]$ be independent random variables. Let ${X} = \sum_{i=1}^{n} X_i$. For any $t \geq 0$, it holds that
	\begin{eqnarray*}
		&& \Pr\left[{X} \ge \bE[{X}] + t\right] \leq  \exp\left(-\frac{2t^2}{\sum_{i = 1}^n (b_i - a_i)^2} \right), \quad \text{and}  \\
		&& \Pr\left[{X} \le \bE[{X}] - t\right] \leq  \exp\left(-\frac{2t^2}{\sum_{i = 1}^n (b_i - a_i)^2} \right).
	\end{eqnarray*}
\end{lemma}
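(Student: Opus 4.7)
\medskip

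\noindent\textbf{Proof plan.} The plan is to use the standard Chernoff/Cram\'er moment-generating-function (MGF) method, combined with Hoeffding's lemma bounding the MGF of a bounded, mean-zero random variable. For the upper tail, I would center each variable by letting $Y_i \triangleq X_i - \bE[X_i]$, so that $Y_i \in [a_i - \bE[X_i], b_i - \bE[X_i]]$ has mean zero and width $b_i - a_i$, and $X - \bE[X] = \sum_i Y_i$. For any $s > 0$, Markov's inequality applied to the monotone function $x \mapsto e^{sx}$ gives
\begin{equation*}
\Pr[X \ge \bE[X] + t] \;=\; \Pr\!\left[e^{s\sum_i Y_i} \ge e^{st}\right] \;\le\; e^{-st}\,\bE\!\left[e^{s\sum_i Y_i}\right].
\end{equation*}
By independence of the $Y_i$'s, the joint MGF factorizes as $\prod_i \bE[e^{sY_i}]$.

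\medskip

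\noindent The key step is Hoeffding's lemma: for any random variable $Y \in [\alpha,\beta]$ with $\bE[Y] = 0$, one has $\bE[e^{sY}] \le \exp(s^2(\beta-\alpha)^2/8)$. I would prove this by writing $Y$ as a convex combination of the endpoints $\alpha$ and $\beta$ inside $e^{sY}$ (using convexity of the exponential), which yields $\bE[e^{sY}] \le \frac{\beta}{\beta-\alpha}e^{s\alpha} - \frac{\alpha}{\beta-\alpha}e^{s\beta}$, and then bounding $\log$ of this expression by its second-order Taylor expansion around $s = 0$, whose second derivative is easily shown to be at most $(\beta-\alpha)^2/4$. This is the main technical obstacle, but it is entirely routine; the rest of the proof is just optimization.

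\medskip

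\noindent Applying Hoeffding's lemma to each $Y_i$ with $\beta_i - \alpha_i = b_i - a_i$ and plugging into the bound above yields
\begin{equation*}
\Pr[X \ge \bE[X] + t] \;\le\; \exp\!\left(-st + \frac{s^2}{8}\sum_{i=1}^{n}(b_i - a_i)^2\right).
\end{equation*}
Optimizing the right-hand side over $s > 0$, the minimum is attained at $s^\star = 4t/\sum_i (b_i - a_i)^2$, which gives the advertised bound $\exp\!\left(-2t^2/\sum_i (b_i - a_i)^2\right)$.

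\medskip

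\noindent For the lower tail, I would simply apply the upper-tail argument already established to the variables $X_i' \triangleq -X_i$, which lie in $[-b_i, -a_i]$ (an interval of the same width $b_i - a_i$) and satisfy $\bE[X'] = -\bE[X]$; the event $X \le \bE[X] - t$ is identical to $X' \ge \bE[X'] + t$, so the identical bound follows. Since the result is textbook material, I would likely just cite Hoeffding's original paper rather than carry out these computations in the paper; if the authors wish to include a self-contained proof, the whole argument fits in under a page.
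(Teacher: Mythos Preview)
Your proposal is correct and is precisely the standard textbook proof of Hoeffding's inequality via the MGF method and Hoeffding's lemma. In fact, the paper does not prove this lemma at all: it simply states it as ``the following standard concentration bound'' and uses it as a black box, which is exactly what you suggest doing in your final sentence.
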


\vspace{2mm}

In the rest of this section, we first introduce the hard input distribution that we use to prove the lower bound and discuss its properties.  We then introduce the classes of distributions on which we will perform the generalized round elimination.   After these preparation steps, we present our main lower bound proof for $K = 2$, and then extend it to the general case.

\subsection{The Hard Input Distribution (When $K = 2$) and Its Properties.}  
\label{sec:hard-distribution}

Define random variable
\begin{equation}
	\label{eq:a-1}
	\mu = \mu(X_1, \ldots, X_L) = \frac{1}{2} + \sum_{\ell=1}^L \frac{X_\ell}{\eta^\ell},
\end{equation} 
where for each $\ell \in [L]$, $\ X_\ell \sim \Ber\left(\eta^{-2}\right)$ are drawn independently.   Let $\pi$ be the distribution of random variable $\mu$.

Let $(\mu_1, \ldots, \mu_n) \sim \pi^{\otimes n}$, where $\mu_i$ is the global mean of arm $i$.  We divide each $\mu_i$ into two local means $\mu_i^A$ and $\mu_i^B$ for Alice and Bob respectively, where
\begin{equation*}
	\label{eq:a-2}
	\mu^A = \frac{1}{2} +  \sum_{\ell : 1 \le 2\ell+1 \le L} \frac{2X_{2\ell+1}}{\eta^{2\ell+1}}, \quad \text{and} \quad \mu^B = \frac{1}{2} +  \sum_{\ell : 1 \le 2\ell \le L} \frac{2X_{2\ell}}{\eta^{2\ell}}.
\end{equation*} 
That is, Alice takes all odd terms in the summation of (\ref{eq:a-1}), and Bob takes all even terms in the summation of (\ref{eq:a-1}); the factor $2$ is just to make sure that $\mu = (\mu^A + \mu^B)/2$. It is clear that $\mu^A$ and $\mu^B$ are independent, because they depend on disjoint subsets of $\{X_1, \ldots, X_L\}$.  Let $\pi^A$ and $\pi^B$ be the underlying distributions of random variables $\mu^A$ and $\mu^B$, respectively.  We can write $\pi = (\pi^A, \pi^B)$.

\vspace{2mm}
\noindent{\bf Key Properties of the Support of Distribution $\pi = (\pi^A, \pi^B)$.\ \ }
For each $\ell \in \{0, 1, \ldots, L\}$, we define the following two sets:
\begin{equation}
	\label{eq:d-1}
	S^{A}_\ell \triangleq \left\{\left.\frac{1}{2} + \sum_{k : 1 \le 2k+1 \le \ell} \frac{2}{\eta^{2k+1}} + \sum_{k : \ell < 2k+1 \le L} \frac{2X_{2k+1}}{\eta^{2k+1}}\ \right|\  X_{2k+1} \in \{0, 1\}\right\},
\end{equation}
and 
\begin{equation}
	\label{eq:d-2}
	S^{B}_\ell \triangleq \left\{\left.\frac{1}{2} +  \sum_{k : 1 \le 2k \le \ell} \frac{2}{\eta^{2k}} + \sum_{k : \ell < 2k \le L} \frac{2X_{2k}}{\eta^{2k}}\ \right|\  X_{2k} \in \{0, 1\}\right\}.
\end{equation}
Intuitively, the set $S_\ell^A$ consists of values in $\textrm{supp}(\pi^A)$ with $X_1 = X_3 = \ldots = X_{\ell'} = 1$, where $\ell'$ is the largest odd integer no more than $\ell$. And the set $S_\ell^B$ consists of values in $\textrm{supp}(\pi^B)$ with $X_2 = X_4 = \ldots = X_{\ell'} = 1$, where $\ell'$ is the largest even integer no more than $\ell$.  It is easy to see that
\begin{equation*}
	\label{eq:d-3}
	\textrm{supp}(\pi^{A}) = S_0^A \supset S_1^A = S_2^A \supset S_3^A = S_4^A  \supset \dotsc ,
\end{equation*}
and
\begin{equation*}
	\label{eq:d-4}
	\textrm{supp}(\pi^{B}) = S_0^B = S_1^B \supset S_2^B = S_3^B \supset S_4^B = \dotsc .
\end{equation*} 

Let $\theta = (\theta_1, \ldots, \theta_q) \in \{0, 1\}^q$ be a sequence of $q$ pull outcomes on an arm with mean $x$.  For convenience, we write 
\begin{equation}
	\label{eq:d-5}
	p(\theta\ |\ x) \triangleq \Pr_{\Theta \sim \Ber(x)^{\otimes q}}[\Theta = \theta].  
\end{equation}
We have
\begin{equation}
	\label{eq:d-6}
	p(\theta\ |\ x) = \prod_{j=1}^q x^{\theta_j} (1 - x)^{1 - \theta_j}.
\end{equation}

The following two lemmas give key properties of the sets $S_\ell^A$ and $S_\ell^B$.  Intuitively, it says that if we can only pull the arm whose mean is $x \in S_\ell^A$ (or $x \in S_\ell^B$) for a small number of times, then it is hard to differentiate its true mean $x$ from other values in $S_\ell^A$ (or $S_\ell^B$) based on the pull outcomes. 
\ifdefined\fullversion
\else
Due to the space constraints, we leave the proof of this technical lemma to the full version of this paper~\cite{KZ24}.  
\fi

\begin{lemma}
	\label{lem:key-pi}  
	For any $x\in S_\ell^A$, let $\Theta = (\Theta_1, \ldots, \Theta_q)$ be a sequence of $q \in \left[\eta^3, \frac{\eta^{2\ell-1}}{2^7}\right]$ pull outcomes on an arm with mean $x$. For any $y \in S_\ell^A\ (y \neq x)$, we have
	\begin{equation*}
		\Pr_{\Theta \sim {\Ber(x)}^{\otimes q}}\left[\frac{p(\Theta\ |\ y)}{p(\Theta\ |\ x)} < e^{-\frac{2}{\eta}} \right] \le e^{-\frac{\eta}{2^{10}}},
	\end{equation*}
	and
	\begin{equation*}
		\Pr_{ \Theta \sim {\Ber(x)}^{\otimes q}}\left[\frac{p(\Theta\ |\ y)}{p(\Theta\ |\ x)} > e^{\frac{2}{\eta}} \right] \le e^{-\frac{\eta}{2^{10}}}.
	\end{equation*}
\end{lemma}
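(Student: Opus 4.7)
The plan is to apply the Chernoff--Hoeffding inequality (Lemma~\ref{lem:chernoff}) to the log-likelihood ratio. Using (\ref{eq:d-6}), we can write
\[
\log \frac{p(\Theta\mid y)}{p(\Theta\mid x)} \;=\; \sum_{j=1}^{q} W_j, \qquad W_j \;\triangleq\; \Theta_j \log\frac{y}{x} + (1-\Theta_j)\log\frac{1-y}{1-x},
\]
so each $W_j$ is an i.i.d.\ two-valued random variable taking one of the values $c \triangleq \log(y/x)$ or $d \triangleq \log((1-y)/(1-x))$ with probability $x$ and $1-x$ respectively. Both ratios $y/x$ and $(1-y)/(1-x)$ lie in the range required for $\log$ to be well defined and close to $0$, since every element of $S_\ell^A$ is close to $\tfrac12$.

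The first step is to bound $|x-y|$. By the definition of $S_\ell^A$ in (\ref{eq:d-1}) and the doubling factor in $\mu^A$, any two distinct $x,y \in S_\ell^A$ agree on $X_{2k+1}=1$ for all $2k+1\le \ell$ and first disagree at some odd index $t>\ell$. Applying Lemma~\ref{lem:mean-diff} (with the extra factor of $2$) gives $|x-y| = \frac{2}{\eta^{t}}(1\pm\frac{2}{\eta})\le \frac{4}{\eta^{\ell+1}}$. A short calculation using $y(1-x)-x(1-y)=y-x$ yields $c-d=\log\bigl(1 + (y-x)/(x(1-y))\bigr)$; since $x(1-y)\ge \tfrac14(1-o(1))$, one has $|c-d|\le 8|x-y|\le 32/\eta^{\ell+1}$. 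Hence each $W_j$ lies in an interval of length $|c-d|$, so $\sum_{j=1}^{q}(b_j-a_j)^2 = q(c-d)^2 \le q\cdot 1024/\eta^{2\ell+2}$, which is at most $1024/(2^7\eta^{3})=8/\eta^3$ by the upper bound $q\le \eta^{2\ell-1}/2^7$.

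The second step is to bound $|\bE[\sum_j W_j]|$. We have $\bE[W_j]=-\mathrm{KL}(\Ber(x)\|\Ber(y))$; using the standard estimate $\mathrm{KL}(\Ber(x)\|\Ber(y))\le (x-y)^2/(y(1-y))\le 4(x-y)^2(1+o(1))$ for $x,y$ near $\tfrac12$, we get $|\bE[\sum_j W_j]|\le 4q(x-y)^2\le O(1/\eta^3)$, which is much smaller than $2/\eta$ since $\eta\ge \log^2 n$. Therefore the thresholds $\pm 2/\eta$ in the statement are both strictly separated from the mean, and the effective deviation $\tau = 2/\eta - |\bE[\sum_j W_j]|$ satisfies $\tau \ge 1/\eta$.

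The third step is to apply Hoeffding in both directions:
\[
\Pr\!\left[\sum_{j=1}^q W_j < -\tfrac{2}{\eta}\right] \;\le\; \exp\!\left(-\frac{2\tau^2}{q(c-d)^2}\right) \;\le\; \exp\!\left(-\frac{2(1/\eta)^2}{8/\eta^3}\right) \;=\; \exp(-\eta/4) \;\le\; e^{-\eta/2^{10}},
\]
and the same estimate for the upper tail $\Pr[\sum_j W_j > 2/\eta]$. Exponentiating then yields both inequalities in the lemma statement. The main delicate point is keeping track of the constants so that the $q(c-d)^2$ term is $O(1/\eta^3)$; this is precisely where the restriction $q\le \eta^{2\ell-1}/2^7$ is used, and it is the only nontrivial check in the argument. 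The lower bound $q\ge \eta^3$ plays no role here and presumably exists to align with later uses of this lemma.
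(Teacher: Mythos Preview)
Your proof is correct and follows the same strategy as the paper's: write the log-likelihood ratio as a sum of i.i.d.\ bounded two-valued variables and apply the Chernoff--Hoeffding bound, after bounding $|x-y|$ via Lemma~\ref{lem:mean-diff} and bounding the mean of the sum (which is $-\mathrm{KL}(\Ber(x)\|\Ber(y))$) by $O(q(x-y)^2)$. Two small differences worth noting: (i) you use the sharper estimate $|x-y|\le 4/\eta^{\ell+1}$ coming from the fact that two distinct points of $S_\ell^A$ first differ at an odd index strictly greater than $\ell$, whereas the paper records the looser $|x-y|\le \tfrac{2}{\eta^\ell}(1+\tfrac{2}{\eta})$; and (ii) you plug the exact Hoeffding range $|c-d|$, while the paper bounds $|Z_j|\le M$ and uses $M$ as the range parameter. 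Your observation that the lower bound $q\ge \eta^3$ is not actually needed in this lemma is also correct: with $q$ appearing in the denominator of the Hoeffding exponent, only the upper bound $q\le \eta^{2\ell-1}/2^7$ is relevant.
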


\ifdefined\fullversion
	To prove Lemma~\ref{lem:key-pi}, we need the following lemma which concerns the differences of means in the support of $\pi$.
	\begin{lemma}
		\label{lem:mean-diff}
		For any $X = (X_1, \ldots, X_L) \in \{0,1\}^L, Y = (Y_1, \ldots, Y_L) \in \{0,1\}^L$, let $t$ be the smallest index such that $X_t \neq Y_t$. We have 
		\begin{equation*}
			\label{eq:b-1}
			\abs{\sum_{\ell=1}^L \left(\frac{X_\ell}{\eta^\ell} - \frac{Y_\ell}{\eta^\ell} \right)} = \frac{1}{\eta^t} \left(1 \pm \frac{2}{\eta} \right).
		\end{equation*}
	\end{lemma}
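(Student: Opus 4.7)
}
The plan is a direct decomposition of the sum into three parts based on the pivot index $t$. First, for every $\ell < t$ we have $X_\ell = Y_\ell$ by the definition of $t$, so those contributions cancel exactly and the sum reduces to
\[
\sum_{\ell=1}^L \left(\frac{X_\ell}{\eta^\ell} - \frac{Y_\ell}{\eta^\ell}\right) = \frac{X_t - Y_t}{\eta^t} + \sum_{\ell = t+1}^L \frac{X_\ell - Y_\ell}{\eta^\ell}.
\]
Second, since $X_t \neq Y_t$ and both are in $\{0,1\}$, the leading term has absolute value exactly $1/\eta^t$. Third, each tail term lies in $[-1/\eta^\ell, 1/\eta^\ell]$, so by the triangle inequality and the geometric series the tail is bounded by
\[
\left|\sum_{\ell=t+1}^L \frac{X_\ell - Y_\ell}{\eta^\ell}\right| \le \sum_{\ell = t+1}^L \frac{1}{\eta^\ell} \le \frac{1}{\eta^{t+1}} \cdot \frac{1}{1 - 1/\eta} = \frac{1}{\eta^t}\cdot\frac{1}{\eta - 1}.
\]

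To finish, I would combine these two pieces with the reverse triangle inequality, obtaining
\[
\frac{1}{\eta^t} - \frac{1}{\eta^t(\eta-1)} \;\le\; \left|\sum_{\ell=1}^L \left(\frac{X_\ell}{\eta^\ell} - \frac{Y_\ell}{\eta^\ell}\right)\right| \;\le\; \frac{1}{\eta^t} + \frac{1}{\eta^t(\eta-1)}.
\]
Using $\eta \ge \log^2 n \ge 2$ (as noted at the start of Section~\ref{sec:lb}), we have $1/(\eta-1) \le 2/\eta$, so the bound tightens to $\frac{1}{\eta^t}(1 \pm 2/\eta)$, which is exactly the claim. There is no real obstacle here: the lemma is essentially the statement that representations in the ``base $\eta$'' expansion~\eqref{eq:a-1} are well-separated, and the only thing to be mildly careful about is keeping the constant in the tail bound at $2/\eta$ rather than $1/(\eta-1)$, which is justified by the assumed lower bound on $\eta$.
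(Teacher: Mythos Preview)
Your proposal is correct and follows essentially the same approach as the paper: both split the sum at the pivot index $t$, observe that the leading term has absolute value $1/\eta^t$, bound the tail by the geometric series $\sum_{\ell>t}\eta^{-\ell}=\eta^{-t}\cdot\frac{1/\eta}{1-1/\eta}$, and then use $\eta\ge 2$ to replace $1/(\eta-1)$ by $2/\eta$. Your write-up is in fact a bit more explicit than the paper's about invoking the reverse triangle inequality and the lower bound on $\eta$, but the argument is identical.
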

	
	\begin{proof}
		By the definition of $t$ and triangle inequality, we have
		\begin{eqnarray*}
			\label{eq:b-2}
			\abs{\sum_{\ell=1}^L \left(\frac{X_\ell}{\eta^\ell} - \frac{Y_\ell}{\eta^\ell} \right)} &=& \abs{\sum_{\ell=1}^L  \frac{X_\ell -Y_\ell}{\eta^\ell}} \\
			&=& \frac{1}{\eta^t} \pm \sum_{\ell=t+1}^L  \frac{X_\ell -Y_\ell}{\eta^\ell} \\
			&=& \frac{1}{\eta^t} \pm \sum_{\ell=t+1}^L  \frac{1}{\eta^\ell} =  \frac{1}{\eta^t} \left(1 \pm \frac{1/\eta}{1 - 1/\eta} \right)  \\
			&=&  \frac{1}{\eta^t} \left(1 \pm \frac{2}{\eta} \right).
		\end{eqnarray*}
	\end{proof}

	We now prove Lemma~\ref{lem:key-pi}.
	\begin{proof}
		For convenience, we introduce a random variable $Z_j \triangleq \ln\frac{p(\Theta_j | y)}{p(\Theta_j | x)}$, and try to show
		\begin{eqnarray*}
			\label{eq:e-1}
			&&\Pr_{\Theta \sim {\Ber(x)}^{\otimes q}}\left[\sum_{j=1}^q Z_j < -\frac{2}{\eta} \right] \le e^{-\frac{\eta}{2^{10}}}
			\quad \quad \text{and} \\
			&&\Pr_{\Theta \sim {\Ber(x)}^{\otimes q}}\left[\sum_{j=1}^q Z_j > \frac{2}{\eta} \right] \le e^{-\frac{\eta}{2^{10}}}.
		\end{eqnarray*} 
		
		By (\ref{eq:d-6}) we can write
		$
		Z_j = \Theta_j \ln\frac{y}{x} + (1 - \Theta_j)\ln\frac{1 - y}{1 - x}.
		$
		Given $\bE[\Theta_j] = x$, we have
		\begin{equation}
			\label{eq:e-4}
			\bE[Z_j] = x \ln\frac{y}{x} + (1 - x)\ln\frac{1 - y}{1 - x}.
		\end{equation}
		Since $\Theta_j \in \{0, 1\}$, we also have
		\begin{equation}
			\label{eq:e-5}
			\abs{Z_j} \le \max \left\{\abs{\ln \frac{y}{x}}, \abs{\ln \frac{1 - y}{1 - x}}\right\}.
		\end{equation}
		
		For any $x, y \in S_\ell^A$, by Lemma~\ref{lem:mean-diff} we have
		\begin{equation}
			\label{eq:e-6}
			\abs{x - y} \le \frac{2}{\eta^\ell} \left(1 + \frac{2}{\eta} \right).
		\end{equation}
		Using the inequality $v - v^2 \le \ln(1 + v) \le v$ for $\abs{v} \le 1/2$, Equation (\ref{eq:e-4}) can be bounded as:
		\begin{eqnarray}
			\bE[Z_j] &=& x \ln\left(1 + \frac{y - x}{x}\right) + (1 - x) \ln\left(1 + \frac{x - y}{1 - x}\right) \nonumber \\
			&\le& (y - x) + (x - y) = 0,  \label{eq:e-7}
		\end{eqnarray}
		and
		\begin{eqnarray}
			\bE[Z_j] & = & x \ln\left(1 + \frac{y - x}{x}\right) + (1 - x) \ln\left(1 + \frac{x - y}{1 - x}\right) \nonumber \\
			& \ge &  (y - x) - \frac{(y - x)^2}{x} + (x - y) - \frac{(x - y)^2}{1 - x}    \nonumber \\
			&\ge& -8 (x - y)^2  \nonumber \\
			&\ge& - \frac{2^7}{\eta^{2\ell}},  \label{eq:e-8}
		\end{eqnarray}
		where in the second inequality we have used the fact that $\frac{1}{2} \le x, y \le \frac{2}{3}$ (since $x, y \in S_\ell^A$), and in the last inequality we have used (\ref{eq:e-6}).
		
		Regarding the maximum value of $Z_j$, we have by (\ref{eq:e-6})
		\begin{equation}
			\label{eq:e-9}
			\abs{\ln \frac{y}{x}} = \abs{\ln \left({1 + \frac{y - x}{x}}\right)} \le \abs{\frac{y - x}{x}} + \abs{\frac{y - x}{x}}^2 \le \frac{2^4}{\eta^\ell},
		\end{equation}
		and
		\begin{equation}
			\label{eq:e-10}
			\abs{\ln \frac{1 - y}{1 - x}} = \abs{\ln \left({1 + \frac{x - y}{1 - x}}\right)} \le \abs{\frac{y - x}{1 - x}} + \abs{\frac{y - x}{1 - x}}^2 \le \frac{2^4}{\eta^\ell}.
		\end{equation}
		Combining (\ref{eq:e-5}), (\ref{eq:e-9}), (\ref{eq:e-10}), we have 
		\begin{equation}
			\label{eq:e-12}
			\abs{Z_j} \le \frac{2^4}{\eta^\ell}.
		\end{equation}
		
		By (\ref{eq:e-12}), (\ref{eq:e-7}), and (\ref{eq:e-8}), using Chernoff-Hoeffding inequality (Lemma~\ref{lem:chernoff}) and setting $t = 1/\eta$ and $M = 2^4/\eta^\ell$,  we have
		\begin{equation*}
			\Pr_{\Theta \sim {\Ber(x)}^{\otimes q}}\left[\sum_{j = 1}^{q} Z_j \ge \frac{2}{\eta} \right]  \le e^{-\frac{2q \eta^{-2}}{M^2}} = e^{-\frac{q \eta^{-2}}{2^{7} \eta^{-2\ell}}} \le e^{-\frac{\eta}{2^{10}}},
		\end{equation*}
		and
		\begin{eqnarray*}
			\Pr_{\Theta \sim {\Ber(x)}^{\otimes q}}\left[\sum_{j = 1}^{q} Z_j \le - \frac{2}{\eta} \right] &=&	\Pr_{\Theta \sim {\Ber(x)}^{\otimes q}}\left[\sum_{j = 1}^{q} Z_j \le - q \cdot \frac{2^7}{\eta^{2\ell}} - \frac{1}{\eta} \right] \\
			&\le& e^{-\frac{q \eta^{-2}}{2 M^2}} \le e^{-\frac{\eta}{2^{10}}},
		\end{eqnarray*}
		where we have used the fact that $q \in \left[\eta^3, \frac{\eta^{2\ell-1}}{2^7}\right]$.
	\end{proof}
\else
\fi

The following lemma is symmetric to Lemma~\ref{lem:key-pi}, and can be proved using a similar line of arguments.

\begin{lemmaprime}{lem:key-pi}\label{lem:key-pi-2}
	For any $x \in S_\ell^B$, let $\Theta = (\Theta_1, \ldots, \Theta_q)$ be a sequence of $q \in [\eta^3, \frac{\eta^{2\ell -1}}{2^7}]$ pull outcomes on an arm with mean $x$. For any $y \in S_\ell^B\ (y \neq x)$, we have
	\begin{equation}
		\Pr_{\Theta \sim \Ber(x)^{\otimes q}} \left[\frac{p(\Theta \mid y)}{p(\Theta \mid x)} < e^{-\frac{2}{\eta}}\right] \le e^{-\frac{\eta}{2^{10}}},
	\end{equation}
	and 
	\begin{equation}
		\Pr_{\Theta \sim \Ber(x)^{\otimes q}}\left[\frac{p(\Theta \mid y)}{p(\Theta \mid x)} > e^{\frac{2}{\eta}}\right] \le e^{-\frac{\eta}{2^{10}}}.
	\end{equation}	
\end{lemmaprime}

\vspace{2mm}
\noindent{\bf Instance Complexity under Distribution $\pi^{\otimes n}$.\ \ }
We now try to bound the instance complexity of an input sampled from distribution $\pi^{\otimes n}$.
Let $\mu_* = \frac{1}{2} + \sum_{\ell=1}^L \frac{1}{\eta^\ell}$. The following event stands for the case when there is only one best arm with mean $\mu_*$.
\begin{equation}
	\E_0: \ \exists \text{\ unique\ } i^* \in [n] \ \ \text{s.t.} \ \mu_{i^*} = \mu_*.
\end{equation}

The following lemma shows that $\E_0$ holds with at least a constant probability.

\begin{lemma}
	\label{lem:E}
	$\Pr_{(\mu_1, \ldots, \mu_n) \sim \pi^{\otimes n}}[\E_0] \ge 1/e$.
\end{lemma}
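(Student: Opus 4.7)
The plan is to reduce $\E_0$ to a standard binomial event and then invoke a one-line inequality. The first step identifies when a single draw from $\pi$ equals $\mu_*$: from \eqref{eq:a-1}, $\mu(X_1,\ldots,X_L)=\mu_*$ if and only if $X_1=\cdots=X_L=1$, which has probability $(\eta^{-2})^L=\eta^{-2L}$; by the choice $\eta=n^{1/(2L)}$, this equals exactly $1/n$. Because the $n$ arm means are i.i.d.\ under $\pi^{\otimes n}$, the count $N\triangleq\abs{\{i\in[n]:\mu_i=\mu_*\}}$ is distributed as $\mathrm{Bin}(n,1/n)$, and $\E_0$ is precisely the event $\{N=1\}$, whose probability equals
\[
n\cdot\frac{1}{n}\cdot\left(1-\frac{1}{n}\right)^{n-1}=\left(1-\frac{1}{n}\right)^{n-1}.
\]

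The second step is to bound $(1-1/n)^{n-1}\ge 1/e$ for every integer $n\ge 1$. The case $n=1$ is trivial (the expression equals $1$), and for $n\ge 2$ it is a one-line consequence of the elementary bound $-\ln(1-x)\le x/(1-x)$ applied at $x=1/n$, which gives $(n-1)\ln(1-1/n)\ge -1$ and hence $(1-1/n)^{n-1}\ge e^{-1}$. Chaining the two steps yields $\Pr[\E_0]\ge 1/e$ as required.

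I do not anticipate any real obstacle: the lemma is essentially a calibration check, since the construction was set up so that $\eta^{-2L}=1/n$, which makes the per-arm probability of being a ``best'' arm exactly $1/n$ and thereby places us in the classical ``exactly one success out of $n$ trials of probability $1/n$'' regime. The only mild technicality worth recording is to invoke the sharp (rather than asymptotic) form of the inequality $(1-1/n)^{n-1}\ge 1/e$ via the logarithm estimate above, so that the bound holds uniformly in $n$ rather than only in the limit.
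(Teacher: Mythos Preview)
Your proposal is correct and follows essentially the same approach as the paper: both compute $\Pr[\mu_i=\mu_*]=\eta^{-2L}=1/n$, express $\Pr[\E_0]=n\cdot\frac{1}{n}\cdot(1-\frac{1}{n})^{n-1}$, and invoke $(1-1/n)^{n-1}\ge 1/e$. Your version adds the binomial framing and an explicit logarithmic justification of the last inequality, which the paper leaves implicit, but the argument is the same.
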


\begin{proof}
	We have
	\begin{eqnarray*}
		\Pr_{(\mu_1, \ldots, \mu_n) \sim \pi^{\otimes n}}[\E_0] &=& \sum_{i = 1}^{n} \left( \Pr[\mu_i = \mu_*] \prod_{j \in [n], j \neq i}\Pr[\mu_j \neq \mu_*] \right) \\
		&=& n \cdot  \frac{1}{\eta^{2L}} \cdot \left(1 - \frac{1}{\eta^{2L}}\right)^{n - 1} \\
		&=& \left(1 - \frac{1}{n}\right)^{n - 1} \ge \frac{1}{e}\ .
	\end{eqnarray*}
\end{proof}

We now try to upper bound the instance complexity of inputs sampled from distribution $\pi^{\otimes n}$, conditioned on event $\E_0$.

\begin{lemma}
	\label{lem:H}
	$\bE_{(\mu_1, \ldots, \mu_n) \sim \pi^{\otimes n}}[H\ |\ \E_0] \le \eta^{2+2L} L$.
\end{lemma}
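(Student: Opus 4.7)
The plan is to exploit the product structure of $\pi^{\otimes n}$ under $\E_0$ in order to reduce the claim to a single-arm geometric-series computation. Under $\E_0$ there is a unique $i^* \in [n]$ with $\mu_{i^*} = \mu_*$ (corresponding to $X = (1,\ldots,1)$). By the symmetry of $\pi^{\otimes n}$ and the independence of its coordinates, conditional on $\E_0$ together with $i^* = 1$ the remaining means $\mu_2,\ldots,\mu_n$ are i.i.d.\ with law $\pi$ further conditioned on $\{\mu \neq \mu_*\}$. Since $H = \sum_{j \neq i^*} 1/(\mu_* - \mu_j)^2$, linearity yields
\begin{equation*}
\bE[H \mid \E_0] \;=\; (n-1)\cdot \bE_{\mu\sim\pi}\!\left[\tfrac{1}{(\mu_* - \mu)^2}\ \middle|\ \mu\neq\mu_*\right],
\end{equation*}
so it suffices to bound the unconditional expectation $\bE_{\mu\sim\pi}\!\left[\mathbf{1}[\mu\neq\mu_*]/(\mu_*-\mu)^2\right]$ and then divide by $\Pr[\mu\neq\mu_*] = 1 - \eta^{-2L} = 1 - 1/n$.

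For a sample $\mu = \tfrac{1}{2} + \sum_\ell X_\ell/\eta^\ell$ with $X \neq (1,\ldots,1)$, I would stratify by the level $T \in [L]$ defined as the smallest index with $X_T = 0$. Applying Lemma~\ref{lem:mean-diff} with $Y = (1,\ldots,1)$ gives $\mu_* - \mu = \eta^{-T}(1 \pm 2/\eta)$, hence $1/(\mu_*-\mu)^2 \le \eta^{2T}/(1-2/\eta)^2$. Because each $X_\ell$ is i.i.d.\ $\Ber(\eta^{-2})$, we have $\Pr[T=t] = (1-\eta^{-2})\,\eta^{-2(t-1)}$ for each $t \in [L]$; the Bernoulli parameter $\eta^{-2}$ is precisely matched to the gap scale $\eta^{-\ell}$ so that the geometric probability decay $\Theta(\eta^{-2t})$ cancels the gap growth $\Theta(\eta^{2t})$, making every level contribute a flat $\Theta(\eta^2)$:
\begin{equation*}
\bE_{\mu\sim\pi}\!\left[\frac{\mathbf{1}[\mu\neq\mu_*]}{(\mu_*-\mu)^2}\right]
\;\le\; \sum_{t=1}^{L}(1-\eta^{-2})\,\eta^{-2(t-1)} \cdot \frac{\eta^{2t}}{(1-2/\eta)^2}
\;=\; L\,\eta^{2}\cdot \frac{1-\eta^{-2}}{(1-2/\eta)^2}.
\end{equation*}

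To close out, I would invoke the assumption $\eta \ge \log^2 n$ implied by $R \le \log n/(24\log\log n)$, which makes both $(1-\eta^{-2})/(1-2/\eta)^2$ and $1/(1-1/n)$ equal to $1+o(1)$ and thus absorbable; combining with $n = \eta^{2L}$ produces $\bE[H\mid\E_0] \le n \cdot L\eta^2 = L\,\eta^{2L+2}$, as required. No conceptual obstacle arises — the proof is a careful bookkeeping of the matched geometric scales built into $\pi$; the only thing one has to watch is the $(1-2/\eta)^{-2}$ slack from Lemma~\ref{lem:mean-diff}, which remains negligible once $\eta$ is large.
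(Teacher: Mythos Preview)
Your approach mirrors the paper's exactly: reduce by symmetry to $(n-1)\,\bE_{\mu\sim\pi}[(\mu_*-\mu)^{-2}\mid\mu\neq\mu_*]$, stratify by the first index $t$ with $X_t=0$, and exploit the cancellation between $\Pr[T=t]=(1-\eta^{-2})\eta^{-2(t-1)}$ and the gap contribution $\eta^{2t}$. The one place you are looser than the paper is in invoking Lemma~\ref{lem:mean-diff}: the paper instead uses the direct inequality $\mu_*-\mu=\sum_{k\ge t}(1-X_k)\eta^{-k}\ge\eta^{-t}$ (all remaining summands are non-negative), which eliminates your $(1-2/\eta)^{-2}$ factor and delivers the stated constant exactly---as written, your final bound carries the multiplier $\tfrac{1-\eta^{-2}}{(1-2/\eta)^2}>1$ and so does not literally give $\le\eta^{2+2L}L$, though the fix is this one-line observation.
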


\begin{proof}
	Conditioned on $\E_0$, let $i^*$ be the unique best arm with mean $\mu_*$.  We can write 
	\begin{eqnarray}
		&& \bE_{(\mu_1, \ldots, \mu_n) \sim \pi^{\otimes n}}[H \mid \E_0] \nonumber \\
		&=& \bE_{(\mu_1, \ldots, \mu_n) \sim \pi^{\otimes n}}\left[\sum_{i \in [n], i \neq i^*} (\mu_* - \mu_i)^{-2} \left| \max_{i \neq i^*}\{\mu_i\} < \mu_*\right.\right] \nonumber \\
		&=& (n - 1) \bE_{\mu \sim \pi} \left[(\mu_* - \mu)^{-2} \mid \mu < \mu_*\right].  \label{eq:c-1}
	\end{eqnarray}
	
	To upper bound (\ref{eq:c-1}), we partition the values in $\mathrm{supp}(\pi)$ into $L$ disjoint sets.  For each $\ell \in [L]$, we define
	\begin{equation}
		\label{eq:c-2}
		P_\ell \triangleq \left\{\left.\frac{1}{2} + \sum_{k = 1}^{\ell-1} \frac{1}{\eta^{k}} + \frac{0}{\eta^\ell} + \sum_{k=\ell+1}^{L} \frac{X_k}{\eta^k}\ \right|\ (X_\ell, \ldots, X_L) \in \{0,1\}^{L-\ell+1} \right\}.		
	\end{equation}
	Clearly, we have $\bigcup_{\ell=1}^{L} P_\ell = \mathrm{supp}(\pi) \setminus \{\mu_*\}$, and for any $\ell \in [L]$ and any $\mu \in P_\ell$,
	\begin{equation}
		\label{eq:c-3}
		\mu_* - \mu \ge \frac{1}{\eta^{\ell}}.
	\end{equation}
	Plugging (\ref{eq:c-3}) to (\ref{eq:c-1}), we have
	\begin{eqnarray*}
		(\ref{eq:c-1}) &\le& (n-1) \cdot \sum_{\ell=1}^L \left(\Pr_{\mu \sim \pi} \left[\mu \in P_\ell\ |\ \mu < \mu_*\right] \cdot \eta^{2\ell} \right) \\
		&=& (n-1) \cdot \sum_{\ell=1}^L \left(\frac{\Pr_{\mu \sim \pi}\left[\mu \in P_\ell, \mu < \mu_*\right]}{\Pr_{\mu \sim \pi}[\mu < \mu_*]} \cdot \eta^{2\ell} \right) \\
		&=& (n-1) \cdot \sum_{\ell=1}^L  \left(\frac{\left(\frac{1}{\eta^2}\right)^{\ell-1} \left(1 - \frac{1}{\eta^2}\right)}{1 - \frac{1}{n}} \cdot \eta^{2\ell} \right) \\
		&=& n \cdot \left(1 - \frac{1}{\eta^2}\right) \cdot \sum_{\ell=1}^L \eta^2 \\
		&\le& \eta^2 L n = \eta^{2+2L} L.
	\end{eqnarray*}
\end{proof}

Define event 
\begin{equation}
	\label{eq:event-E-1}
	\E_1: \E_0 \ \text{holds} \  \wedge (H < 2\eta^{2+2L} L). 
\end{equation}

By Markov's inequality and Lemma~\ref{lem:H}, we have $$\Pr[H \ge 2 \eta^{2+2L}L\ |\ \E_0] \le 1/2,$$ which, combined with Lemma~\ref{lem:E}, gives the following lemma.
\begin{lemma}
	\label{lem:E-1}
	$\Pr[\E_1] \ge 1/(2e)$.
\end{lemma}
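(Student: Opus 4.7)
The plan is to combine Lemma~\ref{lem:E} and Lemma~\ref{lem:H} via a conditional form of Markov's inequality, which is exactly the skeleton already sketched in the one-line remark preceding the lemma statement. The event $\E_1$ is, by definition~(\ref{eq:event-E-1}), the intersection $\E_0 \wedge \{H < 2\eta^{2+2L}L\}$, so I would write
\[
\Pr[\E_1] \;=\; \Pr[\E_0]\cdot \Pr\!\left[H < 2\eta^{2+2L}L \,\bigm|\, \E_0\right],
\]
and then lower bound each factor separately using the two preceding lemmas.

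First, for the conditional factor, note that $H$ is a nonnegative random variable (as a sum of $1/\Delta_i^2$ terms). Lemma~\ref{lem:H} bounds its conditional expectation by $\bE[H \mid \E_0] \le \eta^{2+2L}L$. Applying Markov's inequality to the conditional distribution of $H$ given $\E_0$ with threshold $2\eta^{2+2L}L$ gives
\[
\Pr\!\left[H \ge 2\eta^{2+2L}L \,\bigm|\, \E_0\right] \;\le\; \frac{\bE[H \mid \E_0]}{2\eta^{2+2L}L} \;\le\; \frac{1}{2},
\]
so the complementary event has conditional probability at least $1/2$. Combining with $\Pr[\E_0] \ge 1/e$ from Lemma~\ref{lem:E} immediately yields $\Pr[\E_1] \ge (1/e)(1/2) = 1/(2e)$.

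There is essentially no obstacle here: all the work has been done in establishing Lemmas~\ref{lem:E} and~\ref{lem:H}, and this final lemma is a routine conjunction of the two via conditional Markov. The only point worth being careful about is to apply Markov on the \emph{conditional} measure $\Pr[\,\cdot \mid \E_0]$ rather than on $\Pr$ directly, which is valid precisely because the bound in Lemma~\ref{lem:H} is itself stated for the conditional expectation.
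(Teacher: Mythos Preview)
Your proof is correct and follows exactly the paper's own argument: it applies Markov's inequality to the conditional expectation bound of Lemma~\ref{lem:H} to get $\Pr[H \ge 2\eta^{2+2L}L \mid \E_0] \le 1/2$, and then multiplies by $\Pr[\E_0] \ge 1/e$ from Lemma~\ref{lem:E}. There is nothing to add.
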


\vspace{2mm}
\noindent{\bf Hard Input Distribution.\ \ }  The hard input distribution we use for proving the lower bound is $(\pi^{\otimes n}\ |\ \E_1)$.  That is, the probability mass is uniformly distributed among the support of $\pi^{\otimes n}$ {\em except} those instances in which there is $0$ or multiple arms with means $\mu_*$ (i.e., when $\E_0$ does {\em not} hold) and those of which the instance complexity is more than $2\eta^{2+2L} L$.  

In our lower bound proof in Section~\ref{sec:lb-two}, we will spend most of our time working on the input distribution $\pi^{\otimes n}$.  We will switch to $(\pi^{\otimes n}\ |\ \E_1)$ at the end of the proof.

\subsection{Classes of Hard Distributions}
\label{sec:class-distribution}

In this section, we define the classes of hard distributions that we use for the generalized round elimination.
We start by introducing a concept called {\em good pull outcome sequences}.

\vspace{2mm}
\noindent{\bf Good Pull Outcome Sequences.\ \ }
We say a pull outcome sequence $\theta = (\theta_1, \ldots, \theta_q)$ {\em good} w.r.t.\ $S_\ell^A$ if for any $x, y \in S_\ell^A$, $p(\theta| x)$ and $p(\theta| y)$ are close.  More precisely, we define the set of good pull outcome sequences w.r.t.\ $S_\ell^A$ as
\begin{equation}
	\label{eq:f-1}
	G_\ell^A \triangleq \left\{\theta\ \left|\ \forall{x, y \in S_\ell^A} :  \frac{p(\theta \mid x)}{p(\theta \mid y)} \in \left[e^{-\frac{4}{\eta}}, e^{\frac{4}{\eta}}\right] \right.\right\}.
\end{equation}
Similarly, we define the set of good pull outcome sequences w.r.t.\ $S_\ell^B$ as
\begin{equation}
	\label{eq:f-2}
	G_\ell^B \triangleq \left\{\theta\ \left|\ \forall{x, y \in S_\ell^B} :  \frac{p(\theta \mid x)}{p(\theta \mid y)} \in \left[e^{-\frac{4}{\eta}}, e^{\frac{4}{\eta}}\right] \right.\right\}.
\end{equation}

The following lemma says that when the length of sequence $q$ is not large, then with high probability, the pull outcome sequence is good. 

\begin{lemma}
	\label{lem:good-pull-outcome}
	Let $\Theta = (\Theta_1, \ldots, \Theta_q)$ be a sequence of $q \in [\eta^3, \frac{\eta^{2\ell-1}}{2^7}]$ pull outcomes on an arm with mean $\mu^A$.  For any distribution $\sigma^A$ with support $S_\ell^A$, 
	we have
		$	\Pr_{\mu^A \sim \sigma^A, \Theta \sim {\Ber(\mu^A)}^{\otimes q}}\left[\Theta \not \in G_\ell^A \ \right] \le n^{-10}.$
\end{lemma}

\begin{proof}
	By the law of total probability, we write
	\begin{eqnarray}
		&&\Pr_{\mu^A \sim \sigma^A, \Theta \sim {\Ber(\mu^A)}^{\otimes q}}\left[\Theta \not \in G_\ell^A \right] \nonumber \\
		&=& \sum_{z \in S_\ell^A} \left( \Pr_{\Theta \sim {\Ber(z)}^{\otimes q}}\left[\Theta \not\in G_\ell^A\right] \Pr_{\mu^A \sim \sigma^A}[\mu^A = z] \right).
		\label{eq:g-3}
	\end{eqnarray}
	By definition, $\theta \not\in G_\ell^A$ if and only if there exists a pair of $x, y \in S_\ell^A$ such that
	\begin{equation}
		\label{eq:g-4}
		\frac{p(\theta\ |\ x)}{p(\theta\ |\ y)} > e^{\frac{4}{\eta}} \quad \text{or} \quad \frac{p(\theta\ |\ x)}{p(\theta\ |\ y)} < e^{-\frac{4}{\eta}}.
	\end{equation}
	
	We first consider the case $\frac{p(\theta|x)}{p(\theta|y)} > e^{\frac{4}{\eta}}$.  In this case, for any $z \in S_\ell^A$, we have 
	\begin{equation*}
		\label{eq:g-5}
		\frac{p(\theta\ |\ x)}{p(\theta\ |\ z)} > e^{\frac{2}{\eta}} \quad \text{or} \quad  \frac{p(\theta\ |\ y)}{p(\theta\ |\ z)} < e^{-\frac{2}{\eta}} .
	\end{equation*}
	Consequently,
	\begin{eqnarray}
		&&\Pr_{\Theta \sim {\Ber(z)}^{\otimes q}}\left[\frac{p(\Theta\ |\ x)}{p(\Theta\ |\ y)} > e^{\frac{4}{\eta}}\right] \nonumber \\
		&\le& \Pr_{\Theta \sim {\Ber(z)}^{\otimes q}}\left[\frac{p(\Theta\ |\ x)}{p(\Theta\ |\ z)} > e^{\frac{2}{\eta}}\right] + \Pr_{\Theta \sim {\Ber(z)}^{\otimes q}}\left[\frac{p(\Theta\ |\ y)}{p(\Theta\ |\ z)} < e^{-\frac{2}{\eta}}\right] \nonumber \\
		&\le& 2 e^{-\frac{\eta}{2^{10}}},  \label{eq:g-6}
	\end{eqnarray}
	where in the last inequality we have used Lemma~\ref{lem:key-pi}.
	
	By a similar argument, we can show
	\begin{equation}
		\label{eq:g-61}
		\Pr_{\Theta \sim {\Ber(z)}^{\otimes q}}\left[\frac{p(\Theta\ |\ x)}{p(\Theta\ |\ y)} < e^{-\frac{4}{\eta}}\right]
		\le 2 e^{-\frac{\eta}{2^{10}}}.
	\end{equation}
	
	By (\ref{eq:g-6}), (\ref{eq:g-61}), and the definition of $G_\ell^A$ in (\ref{eq:f-1}), we have
	\begin{eqnarray}
		&&\Pr_{\Theta \sim {\Ber(z)}^{\otimes q}}\left[\Theta \not\in G_\ell^A\right] \nonumber \\
		&\le& \sum_{x, y \in S_\ell^A} \left(\Pr_{\Theta \sim {\Ber(z)}^{\otimes q}} \left[\frac{p(\Theta\ |\ x)}{p(\Theta\ |\ y)} > e^{\frac{4}{\eta}}\right]\right. + \nonumber \\
		&& \left.\Pr_{\Theta \sim {\Ber(z)}^{\otimes q}} \left[\frac{p(\Theta\ |\ x)}{p(\Theta\ |\ y)} < e^{-\frac{4}{\eta}} \right] \right) \nonumber \\
		&\le& 4 \abs{S_\ell^A}^2 e^{-\frac{\eta}{2^{10}}}, \label{eq:g-7}
	\end{eqnarray}
	where in the last inequality we have taken a union bound on all pairs $(x, y) \in S_\ell^A \times S_\ell^A$.
	
	Plugging (\ref{eq:g-7}) to (\ref{eq:g-3}), we have
	\begin{eqnarray*}
		&&\Pr_{\mu^A \sim \sigma^A, \Theta \sim {\Ber(\mu^A)}^{\otimes q}}\left[\Theta \not\in G_\ell^A\right] \nonumber \\
		&\le& \sum_{z \in S_\ell^A} \left(4 \abs{S_\ell^A}^2 e^{-\frac{\eta}{2^{10}}} \Pr_{\mu^A \sim \sigma^A}[\mu^A = z] \right) \nonumber \\
		&=& 4 \abs{S_\ell^A}^2 e^{-\frac{\eta}{2^{10}}}  \le n^{-10},
		\label{eq:g-8}
	\end{eqnarray*}
	where the last inequality is due to $\eta \ge \log^2 n$.
\end{proof}

The following lemma is symmetric to Lemma~\ref{lem:good-pull-outcome}, and can be proved using a similar line of arguments.

\begin{lemmaprime}{lem:good-pull-outcome}
	\label{lem:good-pull-outcome-2}
	Let $\Theta = (\Theta_1, \ldots, \Theta_q)$ be a sequence of $q \in [\eta^3, \frac{\eta^{2\ell-1}}{2^7}]$ pull outcomes on an arm with mean $\mu^B$.  For any distribution $\sigma^B$ with support $S_\ell^B$, 
	we have
		$	\Pr_{\mu^B \sim \sigma^B, \Theta \sim {\Ber(\mu^B)}^{\otimes q}}\left[\Theta \not \in G_\ell^B \ \right] \le n^{-10}.$
\end{lemmaprime}

\vspace{2mm}
\noindent{\bf Classes of Distributions $\D_\ell^A$, $\D_\ell^B$, and $\D_\ell\ (\ell = 0, 1, \ldots, L)$.\ \ }
We are now ready to define classes of input distributions on which we will perform the induction.   

For $\ell \in \{0, 1, \ldots, L\}$, we define $\D_\ell^A$ to be the class of distributions $\sigma^A$ with support $S_\ell^A$ such that
\begin{equation}
	\label{eq:h-1}
	\forall{x, y\in S_\ell^A} : \frac{\Pr_{\mu^A \sim \sigma^A}[\mu^A = x]}{\Pr_{\mu^A \sim \sigma^A}[\mu^A = y]} = \frac{\Pr_{\mu^A \sim \pi^A}\left[ \mu^A = x \right]}{\Pr_{\mu^A \sim \pi^A}\left[ \mu^A = y \right]} \cdot e^{\pm \frac{4\ell}{\eta}} .
\end{equation}
Similarly, we define $\D_\ell^B$ to be the class of distributions $\sigma^B$ with support $S_\ell^B$ such that
\begin{equation}
	\label{eq:h-2}
	\forall{x, y\in S_\ell^B} : \frac{\Pr_{\mu^B \sim \sigma^B}[\mu^B = x]}{\Pr_{\mu^B \sim \sigma^B}[\mu^B = y]} = \frac{\Pr_{\mu^B \sim \pi^B}\left[ \mu^B = x \right]}{\Pr_{\mu^B \sim \pi^B}\left[ \mu^B = y \right]} \cdot e^{\pm \frac{4\ell}{\eta}} .
\end{equation}
Let $\D_\ell = (\D_\ell^A, \D_\ell^B)$.  We say a distribution $\sigma = (\sigma^A, \sigma^B) \in \D_\ell$ iff $\sigma^A \in \D_\ell^A$ and $\sigma^B \in \D_\ell^B$. 

We have the following simple fact.
\begin{fact}
	\label{fact:base-distribution}
	$\D_0^A = \{\pi^A\}$, $\D_0^B = \{\pi^B\}$, and $\D_0 = \{\pi\}$.
\end{fact}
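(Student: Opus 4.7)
The plan is to prove Fact~\ref{fact:base-distribution} by direct unfolding of the definitions. The only substantive observation needed is that plugging $\ell = 0$ into the error factor $e^{\pm 4\ell/\eta}$ collapses it to exactly $1$, so the defining condition~(\ref{eq:h-1}) reduces to the requirement that every pairwise ratio of probability masses in $\sigma^A$ equals the corresponding pairwise ratio in $\pi^A$. Combined with the observation (already recorded just below (\ref{eq:d-2}) in the chain $\mathrm{supp}(\pi^A) = S_0^A$) that the distributions live on the same support, this will pin down $\sigma^A$ uniquely.

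Concretely, I would first fix an arbitrary reference point $y_0 \in S_0^A$ with $\Pr_{\pi^A}[\mu^A = y_0] > 0$ (such a point exists, since $S_0^A = \mathrm{supp}(\pi^A)$). For every $x \in S_0^A$, the collapsed ratio condition gives
\begin{equation*}
\Pr_{\mu^A \sim \sigma^A}[\mu^A = x] \;=\; C \cdot \Pr_{\mu^A \sim \pi^A}[\mu^A = x], \quad \text{where } C \triangleq \frac{\Pr_{\mu^A \sim \sigma^A}[\mu^A = y_0]}{\Pr_{\mu^A \sim \pi^A}[\mu^A = y_0]}.
\end{equation*}
Summing this identity over all $x \in S_0^A$ and using that both $\sigma^A$ and $\pi^A$ are probability distributions on $S_0^A$ yields $1 = C \cdot 1$, so $C = 1$ and hence $\sigma^A = \pi^A$. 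Therefore $\D_0^A = \{\pi^A\}$.

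The same argument, with $A$ replaced by $B$ and using $S_0^B = \mathrm{supp}(\pi^B)$ together with~(\ref{eq:h-2}), gives $\D_0^B = \{\pi^B\}$. Finally, by the definition $\D_\ell = (\D_\ell^A, \D_\ell^B)$ just above the fact statement, the only pair $(\sigma^A, \sigma^B)$ with $\sigma^A \in \D_0^A$ and $\sigma^B \in \D_0^B$ is $(\pi^A, \pi^B) = \pi$, so $\D_0 = \{\pi\}$.

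There is no real obstacle here; the entire content of the fact is the vacuousness of the slack factor at $\ell = 0$, so the proof is a one-line verification per claim. The only thing to be careful about is invoking $S_0^A = \mathrm{supp}(\pi^A)$ (and its $B$-counterpart) to ensure that $\sigma^A$ and $\pi^A$ are being compared on the same set of atoms before applying the normalization step.
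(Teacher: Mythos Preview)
Your proposal is correct and is exactly the kind of direct verification the paper has in mind; the paper itself does not supply a proof for this fact, treating it as immediate from the definitions. Your normalization argument (fix a reference atom, use the collapsed ratio condition to get a constant of proportionality, then sum to force that constant to be $1$) is the natural way to spell out the ``simple fact'' the paper asserts.
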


The following lemma shows a key property of distribution classes $\D_\ell^A$.  Intuitively, if the mean of an arm follows a distribution $\sigma^A \in \D_\ell^A$, then after observing a good sequence of pulls that belongs to $G_k^A$ for a $k \ge \ell + 1$,  the posterior distribution of the arm belongs to distribution class $\D_k^A$.   

\begin{lemma}
	\label{lem:distribution-class}
	For any $\ell \in \{0, 1, \ldots, L-1\}$, any $k \in \{\ell+1, \ldots, L\}$, any distribution $\sigma^A \in \D_\ell^A$, and any good sequence of pull outcomes $\theta = (\theta_1, \ldots, \theta_q) \in G_k^A$, the posterior distribution of $\sigma^A$ after observing a sequence of pull outcomes being $\theta$ and conditioning on the mean of the arm $\mu^A \in S_k^A$, denoted by $(\sigma^A\ |\ \theta, \mu^A \in S_k^A)$, belongs to the distribution class $\D_k^A$.
\end{lemma}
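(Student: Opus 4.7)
The plan is to apply Bayes' rule to write down the posterior density ratio explicitly, and then use the two hypotheses of the lemma (namely $\sigma^A \in \D_\ell^A$ and $\theta \in G_k^A$) to control the prior ratio and the likelihood ratio separately, before combining them. First I would write $\tau^A \triangleq (\sigma^A\ |\ \theta, \mu^A \in S_k^A)$. For any $x \in S_k^A$, Bayes' rule gives $\tau^A(x) \propto \sigma^A(x)\cdot p(\theta\mid x)$, where the normalizing constant involves only the sum over $S_k^A$; since $S_k^A \subseteq S_\ell^A$ (by the chain of inclusions established right after (\ref{eq:d-2})), the quantity $\sigma^A(x)$ is well defined and strictly positive, so $\tau^A$ has support exactly $S_k^A$, matching the support required by $\D_k^A$.

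Next, for any $x, y \in S_k^A$ I would compute the ratio
\begin{equation*}
\frac{\tau^A(x)}{\tau^A(y)} \;=\; \frac{\sigma^A(x)}{\sigma^A(y)} \cdot \frac{p(\theta\mid x)}{p(\theta\mid y)}.
\end{equation*}
Now apply the two hypotheses: since $x, y \in S_k^A \subseteq S_\ell^A$ and $\sigma^A \in \D_\ell^A$, definition (\ref{eq:h-1}) yields $\sigma^A(x)/\sigma^A(y) = (\pi^A(x)/\pi^A(y))\cdot e^{\pm 4\ell/\eta}$; and since $\theta \in G_k^A$ and $x, y \in S_k^A$, definition (\ref{eq:f-1}) yields $p(\theta\mid x)/p(\theta\mid y) = e^{\pm 4/\eta}$. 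Multiplying these two estimates (the errors in the exponents add) gives $\tau^A(x)/\tau^A(y) = (\pi^A(x)/\pi^A(y))\cdot e^{\pm 4(\ell+1)/\eta}$, and because $k \ge \ell+1$ we have $e^{\pm 4(\ell+1)/\eta} \subseteq e^{\pm 4k/\eta}$. Comparing with (\ref{eq:h-1}) for index $k$, this is exactly the membership condition $\tau^A \in \D_k^A$.

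I do not anticipate a genuine obstacle: the whole argument is a one-line Bayes computation together with two applications of the quantitative definitions of $\D_\ell^A$ and $G_k^A$. The only points that require any thought are mundane bookkeeping — verifying $S_k^A \subseteq S_\ell^A$ so that the prior ratio is even defined on $S_k^A$, checking that $\sigma^A$ is strictly positive on $S_\ell^A$ (which is forced by (\ref{eq:h-1}) together with the fact that $\pi^A$ puts positive mass on all of $S_\ell^A$), and noting that the slack of $4/\eta$ per round, accumulated across the conditioning steps, is precisely what the definition of $\D_\ell^A$ was designed to absorb. The symmetric statement for Bob follows verbatim by replacing $A$ with $B$ throughout and invoking Lemma~\ref{lem:key-pi-2} and Lemma~\ref{lem:good-pull-outcome-2} in place of their $A$-counterparts.
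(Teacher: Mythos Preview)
Your proposal is correct and follows essentially the same route as the paper: compute the posterior ratio via Bayes' rule, then bound the prior factor using $\sigma^A\in\D_\ell^A$ together with $S_k^A\subseteq S_\ell^A$, bound the likelihood factor using $\theta\in G_k^A$, and absorb the combined $e^{\pm 4(\ell+1)/\eta}$ into $e^{\pm 4k/\eta}$ since $k\ge \ell+1$. One small remark: the proof of this lemma uses only the definitions of $\D_\ell^A$ and $G_k^A$, so your final aside about invoking Lemma~\ref{lem:key-pi-2} and Lemma~\ref{lem:good-pull-outcome-2} for Bob's version is unnecessary (those lemmas are used elsewhere, not here).
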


\begin{proof}
	Fix two arbitrary fixed values $x, y \in S_k^A$. By Bayes' theorem, we have
	\begin{eqnarray}
		&&\Pr_{\mu^A \sim \sigma^A, \Theta \sim \Ber(\mu^A)^{\otimes q}}[\mu^A = x \mid \Theta = \theta, \mu^A \in S_k^A] \nonumber \\
		&=&  \frac{\displaystyle\Pr_{\mu^A \sim \sigma^A, \Theta \sim \Ber(\mu^A)^{\otimes q}}[\Theta = \theta, \mu^A \in S_k^A \mid \mu^A = x] \Pr_{\mu^A \sim \sigma^A}[\mu^A = x]}{\displaystyle\Pr_{\mu^A \sim \sigma^A, \Theta \sim \Ber(\mu^A)^{\otimes q}}[\Theta = \theta, \mu^A \in S_k^A]} \nonumber \\ 
		&=&  \frac{\Pr_{\Theta \sim \Ber(x)^{\otimes q}}[\Theta = \theta] \Pr_{\mu^A \sim \sigma^A}[\mu^A = x]}{\Pr_{\mu^A \sim \sigma^A, \Theta \sim \Ber(\mu^A)^{\otimes q}}[\Theta = \theta, \mu^A \in S_k^A]} \nonumber \\
		& =& \frac{p(\theta\ |\ x) \cdot \Pr_{\mu^A \sim \sigma^A}[\mu^A = x]}{\Pr_{\mu^A \sim \sigma^A, \Theta \sim \Ber(\mu^A)^{\otimes q}}[\Theta = \theta, \mu^A \in S_k^A]}, \label{eq:i-1}
	\end{eqnarray}
	where in the second equality we have used  the fact $\mu^A = x \in S_k^A$, and in the third equality we have used the definition of $p(\theta|x)$ in (\ref{eq:d-5}).
	
	Similarly, we have
	\begin{eqnarray*}
		&&\Pr_{\mu^A \sim \sigma^A, \Theta \sim \Ber(\mu^A)^{\otimes q}}[\mu^A = y \mid \Theta = \theta, \mu^A \in S_k^A] \\
		&=&  \frac{p(\theta\ |\ y) \cdot \Pr_{\mu^A \sim \sigma^A}[\mu^A = y]}{\Pr_{\mu^A \sim \sigma^A, \Theta \sim \Ber(\mu^A)^{\otimes q}}[\Theta = \theta, \mu^A \in S_k^A]}.
		\label{eq:i-2}
	\end{eqnarray*}
	We next have
	\begin{eqnarray}
		&&\frac{\Pr_{\mu^A \sim \sigma^A, \Theta \sim \Ber(\mu^A)^{\otimes q}}[\mu^A = x \mid \Theta = \theta, \mu^A \in S_k^A] }{\Pr_{\mu^A \sim \sigma^A, \Theta \sim \Ber(\mu^A)^{\otimes q}}[\mu^A = y \mid \Theta = \theta, \mu^A \in S_k^A]} \nonumber \\
		&\stackrel{(\ref{eq:i-1}), (\ref{eq:i-2})}{=}& \frac{\Pr_{\mu^A \sim \sigma^A}[\mu^A = x]}{\Pr_{\mu^A \sim \sigma^A}[\mu^A = y]} \cdot \frac{p(\theta\ |\ x)}{p(\theta\ |\ y)} \label{eq:i-3} \\
		&=& \frac{\Pr_{\mu^A \sim \pi^A}[\mu^A = x]}{\Pr_{\mu^A \sim \pi^A}[\mu^A = y]} \cdot e^{\pm \frac{4\ell}{\eta}} \cdot e^{\pm \frac{4}{\eta}}  \label{eq:i-4} \\
		&=& \frac{\Pr_{\mu^A \sim \pi^A}[\mu^A = x]}{\Pr_{\mu^A \sim \pi^A}[\mu^A = y]} \cdot e^{\pm \frac{4k}{\eta}},  \label{eq:i-6}
	\end{eqnarray}
	where from (\ref{eq:i-3}) to (\ref{eq:i-4}) we have used the definition of distribution class $\D_\ell^A$ in (\ref{eq:h-1}) and the fact $S_k^A \subseteq S_\ell^A$, as well as the definition of $G_{\ell}^A$ and the fact $\theta \in G_k^A$.  From (\ref{eq:i-4}) to (\ref{eq:i-6}) we have used the fact $k \ge \ell+1$.
	
	By (\ref{eq:i-6}) and the definition of $\D_k^A$ in (\ref{eq:h-1}), we know that $(\sigma^A\ |\ \theta, \mu^A \in S_k^A) \in \D_k^A$.
\end{proof}

The following lemma is symmetric to Lemma~\ref{lem:distribution-class}, and can be proved using a similar line of arguments.

\begin{lemmaprime}{lem:distribution-class}
	\label{lem:distribution-class-2}
	For any $\ell \in \{0, 1, \ldots, L-1\}$, any $k \in \{\ell+1, \ldots, L\}$, any distribution $\sigma^B \in \D_\ell^B$, and any good sequence of pull outcomes $\theta = (\theta_1, \ldots, \theta_q) \in G_k^B$, the posterior distribution of $\sigma^B$ after observing a sequence of pull outcomes being $\theta$ and conditioning on the mean of the arm $\mu^B \in S_k^B$, denoted by $(\sigma^B\ |\ \theta, \mu^B \in S_k^B)$, belongs to the distribution class $\D_k^B$.
\end{lemmaprime}

\medskip
Let 
$$\mu_*^A = \frac{1}{2} + 2 \sum_{\ell : 1 \le 2\ell+1 \le L} \frac{1}{\eta^{2\ell+1}}$$ be the mean of local best arm at Alice's side, and let $$\mu_*^B = \frac{1}{2} + 2 \sum_{\ell : 1 \le 2\ell \le L} \frac{1}{\eta^{2\ell}}$$ be the mean of local best arm at Bob's side. 
The following lemma shows that an arm whose mean is distributed according to $\sigma \in \D_\ell^A$ has a small probability being a local best arm.  

\begin{lemma}
	\label{lem:best-local-arm}
	For any $\ell \in \{0, 1, \ldots, L\}$, and any $\sigma^A \in \D_\ell^A$, we have 
	$\Pr_{\mu^A \sim \sigma^A}[\mu^A = \mu_*^A] \le e^{\frac{4\ell}{\eta}} \eta^{-2d_1},$
	where $d_1 = \abs{\{k \in \mathbb{Z} \ |\ \ell < 2k+1 \le L\}}$ is the number of odd integers in the set $\{\ell+1, \ldots, L\}$.
\end{lemma}

\begin{proof}
	We first define a few quantities. Let
	\begin{equation}
		\label{eq:j-3}
		\rho_{\max} = \max_{x \in S_\ell^A} \frac{\Pr_{\mu^A \sim \sigma^A}[\mu^A = x]}{\Pr_{\mu^A \sim \pi^A}[\mu^A = x \ |\ \mu^A \in S_\ell^A]},
	\end{equation}
	and slightly abusing the notation, let $x \in S_\ell^A$ be the value that achieves $\rho_{\max}$.  Let
	\begin{equation}
		\label{eq:j-4}
		\rho_{\min} = \min_{y \in S_\ell^A} \frac{\Pr_{\mu^A \sim \sigma^A}[\mu^A = y]}{\Pr_{\mu^A \sim \pi^A}[\mu^A = y \ |\ \mu^A \in S_\ell^A]},
	\end{equation}
	and let $y \in S_\ell^A$ be the value that achieves $\rho_{\min}$.
	It is clear that $\rho_{\min} \le 1 \le \rho_{\max}$. We also have
	\begin{eqnarray}
		\frac{\rho_{\max}}{\rho_{\min}} &=& \frac{\Pr_{\mu^A \sim \sigma^A}[\mu^A = x]}{\Pr_{\mu^A \sim \sigma^A}[\mu^A = y]} \cdot \frac{\Pr_{\mu^A \sim \pi^A}[\mu^A = y \ |\ \mu^A \in S_\ell^A]}{\Pr_{\mu^A \sim \pi^A}[\mu^A = x \ |\ \mu^A \in S_\ell^A]} \nonumber \\
		&=& \frac{\Pr_{\mu^A \sim \sigma^A}[\mu^A = x]}{\Pr_{\mu^A \sim \sigma^A}[\mu^A = y]} \cdot \frac{\Pr_{\mu^A \sim \pi^A}[\mu^A = y]}{\Pr_{\mu^A \sim \pi^A}[\mu^A = x]} \quad (\text{since}\ x, y \in S_\ell^A) \nonumber \\
		&=& e^{\pm \frac{4\ell}{\eta}}.  \quad  (\text{since}\ \sigma^A \in \D_\ell^A)  
	\end{eqnarray}
	We thus have  $e^{- \frac{4\ell}{\eta}} \le \rho_{\min} \le 1 \le \rho_{\max} \le e^{ \frac{4\ell}{\eta}}$. The last inequality $\rho_{\max} \le e^{ \frac{4\ell}{\eta}}$ implies
	\begin{eqnarray}
		\Pr_{\mu^A \in \sigma^A}[\mu^A = \mu_*^A] &\le& \Pr_{\mu^A \sim \pi^A}\left[\left.\mu^A = \mu_*^A\ \right|\ \mu^A \in S_\ell^A\right] \cdot e^{\frac{4\ell}{\eta}} \nonumber \\
		&=& \left(\frac{1}{\eta^2}\right)^{d_1} \cdot e^{\frac{4\ell}{\eta}},
	\end{eqnarray}
	where $d_1 = \{k \in \mathbb{Z} \ |\ \ell < 2k+1 \le L\}$.
\end{proof}

The following lemma is similar to Lemma~\ref{lem:best-local-arm}, and can be proved using a similar line of arguments.

\begin{lemmaprime}{lem:best-local-arm}
	\label{lem:best-local-arm-2}
	For any $\ell \in \{0, 1, \ldots, L\}$, and any $\sigma^B \in \D_\ell^B$, we have 
	$\Pr_{\mu^B \sim \sigma^B}[\mu^B = \mu_*^B] \le e^{\frac{4\ell}{\eta}} \eta^{-2d_0},$
	where $d_0 = \abs{\{k \in \mathbb{Z} \ |\ \ell < 2k \le L\}}$ is the number of even integers in the set $\{\ell+1, \ldots, L\}$.
\end{lemmaprime}

\subsection{The Lower Bound for $K = 2$}
\label{sec:lb-two}

In this section, we show the following lower bound result for the case of two agents.
\begin{theorem}
	\label{thm:lb-two}
	For any $1 \le R \le \frac{\log n}{24\log\log n}$, any $R$-round $2$-agent algorithm that solves $n$-arm BAI in the heterogeneous CL model with probability $0.99$ needs to use at least $H n^{\frac{1}{25R}}$ time.
\end{theorem}

By Yao's Minimax Lemma, we can just prove for any deterministic algorithm over the hard input distribution $(\pi^{\otimes n}\ |\ \E_1)$.  

We will first analyze the success probability of any deterministic algorithm $\A$ on input distribution $\pi^{\otimes n}$.
We say $\A$ succeeds on an input instance $I$ if $\A$ outputs an index $i$ such that $\mu_i = \mu_*$. Note that there could be multiple $i \in [n]$ such that $\mu_i = \mu_*$ and $\A$ can output any index in this set.

\vspace{2mm}
\noindent{\bf The Induction Step.\ \ }
\label{sec:induction}
Let the quantity $\lambda_r$ be the largest success probability of a $(R - r)$-round $2\zeta \eta^{2+2L} L$-time algorithm on some input distribution in $\D_{6r}^{\otimes \kappa}$ for some $\kappa \in [n]$.  That is, 
\begin{equation}
	\label{eq:j-6}
	\lambda_r \triangleq \max_{\kappa \in [n]} \max_{\nu \in \D_{6r}^{\otimes \kappa}} 
	\max_{\A} 
	\Pr_{I \sim \nu}[\A \ \text{succeeds on}\ I],
\end{equation}
where $\max_{\A}$ runs over all algorithms $\A$ that use $(R - r)$ rounds and $2\zeta \eta^{2+2L} L$ time.

The following lemma connects the error probabilities $\lambda_r$ and $\lambda_{r+1}$, and is the key for the induction.

\begin{lemma}
	\label{lem:induction}
	For any $r = 1, \ldots, R-1$, it holds that $$\lambda_r \le  \lambda_{r+1} + 4e^{\frac{10L}{\eta}} L^2 \eta^{-\frac{5}{2}} + n^{-5}.$$
\end{lemma}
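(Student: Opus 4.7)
The plan is to prove Lemma~\ref{lem:induction} by a round-elimination reduction. Fix any $(R-r)$-round, $2\zeta\eta^{2+2L}L$-time algorithm $\A$ and any input distribution $\nu \in \D_{6r}^{\otimes \kappa}$ achieving success probability close to $\lambda_r$, and examine the transcript of $\A$'s first round. The goal is to show that, except on a small \emph{bad event}, after conditioning on the first-round transcript and on the surviving arms' local means lying in $S^A_{6(r+1)} \times S^B_{6(r+1)}$, the posterior distribution of the surviving arms belongs to $\D_{6(r+1)}^{\otimes \kappa'}$ for some $\kappa' \le \kappa$. The last $R-r-1$ rounds of $\A$ then constitute a valid $(R-r-1)$-round algorithm on such an instance, so the conditional success probability is at most $\lambda_{r+1}$. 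Marginalizing over transcripts, $\A$'s success probability is at most $\lambda_{r+1}$ plus the probability of the bad event, which is what the lemma asserts once this probability is shown to be $\le 4e^{10L/\eta}L^2\eta^{-5/2}+n^{-5}$.

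To carry this out, fix a heaviness threshold $\tau$ in the range $[\eta^3, \eta^{12r+11}/2^7]$ (to be optimized). Call an arm \emph{Alice-heavy} if Alice pulls it at least $\tau$ times in round $1$, and define \emph{Bob-heavy} symmetrically; the time budget implies at most $T/\tau$ heavy arms per agent, where $T = 2\zeta\eta^{2+2L}L$. The reduction \emph{publishes} the heavy arms, i.e., reveals their local means to both agents, so the residual instance consists of the non-heavy (surviving) arms. Because the prior marginal of each surviving arm lies in $\D_{6r}$, and because the arms' posteriors factorize once we condition on the full transcript of round $1$, combining Lemmas~\ref{lem:good-pull-outcome} and~\ref{lem:good-pull-outcome-2} (the round-$1$ pull sequences on both sides lie in $G^{A}_{6(r+1)}, G^{B}_{6(r+1)}$ with probability $\ge 1-n^{-10}$ each) with Lemmas~\ref{lem:distribution-class} and~\ref{lem:distribution-class-2} places each surviving arm's posterior in $\D_{6(r+1)}$, as required.

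The bad event decomposes into two parts. First, some surviving arm has a round-$1$ pull sequence (on Alice's or Bob's side) that falls outside the corresponding $G_{6(r+1)}$ set; a union bound over at most $n$ arms and two agents yields at most $2n \cdot n^{-10} \le n^{-5}$. Second, the true best arm lies among the heavy arms, in which case $\A$'s remaining rounds operate on an instance that may no longer contain the best arm. For any single arm whose prior lies in $\D_{6r}$, Lemmas~\ref{lem:best-local-arm} and~\ref{lem:best-local-arm-2} bound $\Pr[\mu^A = \mu_*^A]$ and $\Pr[\mu^B = \mu_*^B]$ by $e^{24r/\eta}\eta^{-2d_1}$ and $e^{24r/\eta}\eta^{-2d_0}$ respectively, with $d_0+d_1 = L-6r$; summing over the at most $T/\tau$ heavy arms per side and optimizing $\tau$ in its admissible range produces the target term $4e^{10L/\eta}L^2\eta^{-5/2}$, the exponential factor absorbing the accumulated class-distortion $e^{\pm 4k/\eta}$ through level $k=6(r+1)$.

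The main obstacle is the second estimate in the \emph{adaptive} setting: an adaptive Alice may concentrate her pulls on arms whose empirical Alice-local means already look large, so the Alice-heavy set is correlated with the identity of the global best arm. Here the interleaved construction and the implicit class definition rescue us: Alice's round-$1$ outcomes are independent of the even-indexed $X_{2k}$'s, so regardless of how Alice chose her pulls the conditional distribution of each arm's Bob-local mean $\mu^B$ given Alice's entire transcript still lies in $\D_{6r}^B$ up to the multiplicative $e^{\pm 4\cdot 6r/\eta}$ factor encoded by Equation~(\ref{eq:h-2}). Lemma~\ref{lem:best-local-arm-2} then bounds $\Pr[\mu^B = \mu_*^B]$ per arm uniformly across all of Alice's adaptive choices, controlling the probability that an Alice-heavy arm is the global best; the symmetric argument with Alice and Bob swapped handles the Bob-heavy case via Lemma~\ref{lem:best-local-arm}, and together these give the main error term in the lemma.
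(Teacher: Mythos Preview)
Your high-level plan is right, but the heavy-arm analysis has a genuine quantitative gap. With a single threshold $\tau$ and only the Bob-side bound $\Pr[\mu_i^B=\mu_*^B]\le e^{24r/\eta}\eta^{-2d_0}$ for Alice-heavy arms (which is all your final paragraph justifies in the adaptive setting), the union bound reads
\[
\frac{T}{\tau}\cdot e^{24r/\eta}\eta^{-2d_0},\qquad T=2\zeta\eta^{2+2L}L,\ \ d_0=\bigl|\{k:6r<2k\le L\}\bigr|=\tfrac{L}{2}-3r .
\]
Even at the largest admissible $\tau=\eta^{12r+11}/2^7$ this equals $\Theta\bigl(L\,\eta^{\,L-6r-17/2}\bigr)$, which blows up for every $r\le R-2$. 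So the single-threshold/Bob-only argument cannot possibly land on the $4e^{10L/\eta}L^2\eta^{-5/2}$ target. The paragraph before, where you invoke both Lemmas~\ref{lem:best-local-arm} and~\ref{lem:best-local-arm-2} with $d_0+d_1=L-6r$ from the \emph{prior}, is exactly the step that adaptivity breaks, as you yourself note; you cannot then revert to that arithmetic.

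The paper closes this gap with two ingredients you are missing. First, it stratifies the heavy arms into layers $E_\ell^A=\{i:\gamma\eta^{2(\ell-1)}<t_i^A\le\gamma\eta^{2\ell}\}$ for $\ell\ge 6(r+1)$, so that $|E_\ell^A|\le 2L\eta^{-1/2}\eta^{2(L-\ell)+4}$. Second, for an arm in layer $\ell$ it does \emph{not} use the Alice-side prior; instead it conditions on $\mu_i^A\in S_{\ell+1}^A$ and on the (high-probability) event $\Theta_i^A\in G_{\ell+1}^A$, invokes Lemma~\ref{lem:distribution-class} to place the Alice-side \emph{posterior} in $\D_{\ell+1}^A$, and then applies Lemma~\ref{lem:best-local-arm} at level $\ell+1$. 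This yields a layer-dependent $d_1=|\{k:\ell+1<2k+1\le L\}|$, and combined with Bob's $d_0$ at level $6r$ one gets $d_0+d_1\ge 3+L-\ell$. The product $|E_\ell^A|\cdot\eta^{-2(d_0+d_1)}$ then collapses to $O(L\eta^{-5/2})$ \emph{uniformly in $\ell$}, and summing over the $O(L)$ layers gives the $L^2\eta^{-5/2}$ term. In short, the adaptive Alice-side contribution is not discarded but recovered through the posterior at the layer-specific level; a single threshold throws away exactly this balance.
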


The rest of Section~\ref{sec:induction} devotes to the proof of Lemma~\ref{lem:induction}. 
Slightly abusing the notation, let $\kappa$ be the value that maximizes the error in the definition of $\lambda_r$ (the first $\max$ in (\ref{eq:j-6})).
We write $\nu = (\sigma_1^A, \ldots, \sigma_\kappa^A, \sigma_1^B, \ldots, \sigma_\kappa^B)$. 
Since 
$\nu \sim \D_{6r}^{\otimes \kappa} = \left((\D_{6r}^A)^{\otimes \kappa}, (\D_{6r}^B)^{\otimes \kappa}\right)$, we have for any $i \in [\kappa]$, $\sigma_i^A \in \D_{6r}^A$ and $\sigma_i^B \in \D_{6r}^B$.

Consider the {\em first round} of the collaborative learning process.
Let random variables $\W^A$ and $\W^B$ be the pull history (i.e., the sequence of $\langle$arm index, pull outcome$\rangle$ pairs) of Alice and Bob, respectively.  Let random variables $\Theta_i^A$ and $\Theta_i^B$  be the sequence of pull outcomes in the pull history $\W^A$ and $\W^B$ projecting on arm $i$, respectively.  Let $t_i^A$ be the number of pulls Alice makes on arm $i$, and let $t_i^B$ be the number of pulls Bob makes on arm $i$.  

For $\ell = \{0, 1, \ldots, L\}$, we introduce the following sets of arms. 
\begin{eqnarray}
	E_\ell^A & = & \{i\ |\ \gamma \eta^{2(\ell-1)} < t_i^A \le \gamma \eta^{2\ell}\}\ , \label{eq:j-7} \\
	E_\ell^B & = & \{i\ |\ \gamma \eta^{2(\ell-1)} < t_i^B \le \gamma \eta^{2\ell}\}\ . \label{eq:j-8}
\end{eqnarray}

To facilitate the analysis, we augment the algorithm after the first round of pulls by publishing a set of arms, as well as making some additional pulls on the remaining arms so as to massage the posterior mean distribution.  By publishing arm $i$ we mean revealing its local means $\mu_i^A$ and $\mu_i^B$ (and thus also its global mean $\mu_i = (\mu_i^A + \mu_i^B)/2$) to both Alice and Bob. We remove arm $i$ from the set of arms if $\mu_i \neq \mu_*$, otherwise we just output arm $i$ and be done. Note that such an augmentation only leads to a stronger lower bound, since the success probability of the augmented algorithm can only increase compared with the algorithm before the augmentation.  We also include all additional pulls to $\W^A$ and $\W^B$.

\vspace{2mm}
\noindent{\bf Arm Publishing and Additional Pulls}

\noindent\rule{8.5cm}{0.4pt}

\begin{enumerate}
	\item Publish all arms in the following set:
	\begin{eqnarray*}
		&&E = E^A \cup E^B \\
		&\text{where}& \quad  E^A = \bigcup_{\ell=6(r+1)}^\infty E_\ell^A \quad \text{and} \quad E^B = \bigcup_{\ell=6(r+1)}^\infty E_\ell^B. \label{eq:set-E}
	\end{eqnarray*}
	
	\item For each arm $i \in [\kappa] \backslash E$, Alice makes additional pulls on it until her number of pulls on arm $i$ reaches $\gamma \eta^{2(6(r+1)-1)}$, and Bob makes additional pulls on it until his number of pulls on arm $i$ reaches $\gamma \eta^{2(6(r+1)-1)}$.
	
	\item Let $P^A = \left\{i\ \left|\ \mu_i^A \in S_{6(r+1)}^A\right.\right\}$, and $P^B = \left\{i\ \left|\ \mu_i^B \in S_{6(r+1)}^B\right.\right\}$.  Publish all arms in $[\kappa] \backslash (P^A \cap P^B)$.
\end{enumerate} 
\noindent\rule{8.5cm}{0.4pt}
\vspace{1mm}

Let $T = \{i \in [\kappa] \ |\ \mu_i = \mu_*\}$ be the set of best arms. 
We try to analyze the probability that the augmented algorithm correctly outputs an arm in $T$, which is upper bounded by the sum of the probabilities of the following three events: 
\begin{enumerate}
	\item $T \cap E^A \neq \emptyset$.
	
	\item  $T \cap E^B \neq \emptyset$.
	
	\item $\tilde{\A}$ succeeds on $\left(P^A \cap P^B\right) \backslash E$, where $\tilde{\A}$ is the $(R-(r+1))$-round algorithm obtained from $\A$ conditioned on the pull history of the first round being $\W^A$ and $\W^B$.  
\end{enumerate}

The following lemma upper bounds the first probability.  
\ifdefined\fullversion
\else
Its proof is quite technical and lengthy; due to space constraints, we leave it to the full version of this paper~\cite{KZ24}.
\fi

\begin{lemma}
	\label{lem:publish-alice}
		$	\Pr_{I \sim \nu, \W^A, \W^B}\left[T \cap E^A \neq \emptyset\right] \le  2e^{\frac{10L}{\eta}} L^2 \eta^{-\frac{5}{2}} + n^{-6}.$
\end{lemma}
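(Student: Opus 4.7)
The plan is to decompose $\Pr[T \cap E^A \neq \emptyset]$ by a double union bound over arms $i \in [\kappa]$ and over the possible levels $\ell \in \{6(r+1),\ldots,L\}$ at which an arm can be heavy on Alice's side. For each summand $\Pr[i \in E_\ell^A,\, \mu_i = \mu_*]$ I will exploit a crucial independence: since the algorithm is deterministic and no communication has occurred in round one, Alice's pull history $\W^A$ is a function of $(\mu_j^A)_j$ alone and is therefore independent of $\mu_i^B$. This yields the factorization $\Pr[i \in E_\ell^A,\, \mu_i = \mu_*] = \Pr[i \in E_\ell^A,\, \mu_i^A = \mu_*^A]\cdot \Pr[\mu_i^B = \mu_*^B]$, and the second factor is bounded by $e^{24r/\eta}\eta^{-(L-6r)}$ via $\sigma_i^B \in \D_{6r}^B$ and Lemma~\ref{lem:best-local-arm-2}.

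The heart of the argument is controlling $\Pr[i \in E_\ell^A,\, \mu_i^A = \mu_*^A]$. For $i \in E_\ell^A$ the pull count $t_i^A$ lies in $(\gamma\eta^{2(\ell-1)},\gamma\eta^{2\ell}]$, which is exactly the regime where Lemma~\ref{lem:good-pull-outcome} applies at level $\ell+1$: with probability at least $1-n^{-10}$ the outcome sequence $\Theta_i^A$ belongs to $G_{\ell+1}^A$. On this good event I will invoke Lemma~\ref{lem:distribution-class} to conclude that the posterior of $\mu_i^A$, conditioned on $\Theta_i^A$ and on $\mu_i^A \in S_{\ell+1}^A$, lies in $\D_{\ell+1}^A$. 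Because $\mu_*^A \in S_{\ell+1}^A$, monotone conditioning gives $\Pr[\mu_i^A = \mu_*^A \mid \Theta_i^A] \le \Pr[\mu_i^A = \mu_*^A \mid \Theta_i^A,\, \mu_i^A \in S_{\ell+1}^A] \le e^{4(\ell+1)/\eta}\eta^{-2 d_1(\ell+1)}$ by Lemma~\ref{lem:best-local-arm}, with $\mu_*^A \in S_{\ell+1}^A$ supplying the final inequality.

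To finish, I will sum over $i$ and $\ell$. The time-budget constraint $\sum_i t_i^A \le T_\A = 2\zeta\eta^{2+2L}L$, combined with $\zeta/\gamma = \eta^{-1/2}$, gives the deterministic bound $|E_\ell^A| \le 2L\eta^{2L-2\ell+7/2}$. Multiplying the three factors---cardinality, Alice-side posterior, and Bob-side prior---and summing the resulting geometric series in $\ell$, whose dominant term sits at $\ell = 6(r+1)$, and absorbing a factor of $L$ from the number of summed levels into $L^2$, I arrive at the main term $2e^{10L/\eta}L^2\eta^{-5/2}$. The bad-pull contribution $\Theta_i^A \notin G_{\ell+1}^A$ is bounded by $n \cdot L \cdot n^{-10}$, which is comfortably absorbed into the $n^{-6}$ slack.

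The principal obstacle is managing the interaction between the algorithm's adaptivity and the posterior analysis: both the membership event $\{i \in E_\ell^A\}$ and the likelihood structure of $\Theta_i^A$ are chosen by the algorithm based on earlier outcomes on other arms as well as on arm $i$ itself. The implicit definition of $\D_\ell^A$ through likelihood ratios relative to $\pi^A$ is precisely what lets us bypass any explicit posterior computation: a good pull sequence only coarsens the class by one step, from $\D_{6r}^A$ to $\D_{\ell+1}^A$, rather than destroying its structure. A secondary subtlety is that $t_i^A$ may exceed $\gamma\eta^{2(\ell-1)}$, which forces us to work with $S_{\ell+1}^A$ rather than $S_\ell^A$ and is the reason the round-elimination induction consumes one extra level per step.
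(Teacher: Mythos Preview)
Your proposal is essentially the paper's own proof: union-bound over levels $\ell$, factor out $\Pr[\mu_i^B=\mu_*^B]$ by independence of $\W^A$ from Bob's means, control the Alice-side posterior via Lemmas~\ref{lem:good-pull-outcome}, \ref{lem:distribution-class}, and \ref{lem:best-local-arm} on the good event, and absorb the bad event into the $n^{-6}$ term. Two small points to patch: the level range must extend to $\ell\le L+2$ rather than $\ell\le L$, since under the time budget $2\zeta\eta^{2+2L}L$ a single arm can receive up to $\Theta(\eta^{2L+5/2}L)$ pulls, so $E_{L+1}^A,E_{L+2}^A$ need not be empty (the paper's combinatorial identity $d_0+d_1\ge 3+L-\ell$ still holds there); and the application of Lemma~\ref{lem:good-pull-outcome} requires a union bound over the random pull count $q\in(\gamma\eta^{2(\ell-1)},\gamma\eta^{2\ell}]$, which is what turns the per-arm bad probability into $n^{-8}$ rather than $n^{-10}$ (still comfortably inside $n^{-6}$).
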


\ifdefined\fullversion
	\begin{proof}
		Since $T$ is fully determined by the input instance $I$, and $E^A$ is fully determined by $\W^A$, we can omit $\W^B$ when measuring the probability.  By a union bound we have
		\begin{equation}
			\label{eq:l-1}
			\Pr_{I \sim \nu, \W^A}\left[T \cap E^A \neq \emptyset\right]  \le \sum_{\ell =6 (r + 1)}^{\infty} \Pr_{I \sim \nu, \W^A}\left[T \cap E^A_\ell  \neq \emptyset \right].
		\end{equation}
		By a counting argument,
		\begin{equation}
			\label{eq:l-2}
			\abs{E^{A}_\ell} \le \frac{2 \zeta \eta^{2 + 2L} L}{\gamma \eta^{2(\ell - 1)}} \le 2 \eta^{-\frac{1}{2}}\eta^{2(L - \ell) + 4}  L.
		\end{equation}
		Note that for $\ell > L + 2$, we have $\abs{E_\ell^A} < \frac{2L}{\sqrt{\eta}} < 1$ (i.e., $E_\ell^A = \emptyset$), which implies
		\begin{equation}
			\label{eq:l-3}
			\forall \ell > L + 2: \Pr_{I \sim \nu, \W^A}\left[T \cap E^A_\ell  \neq \emptyset \right] = 0.
		\end{equation}
		We thus only focus on $\ell \le L + 2$ in the RHS of (\ref{eq:l-1}).
		
		For any $\ell = 6(r+1), \dots, L+2$, define the following event
		\begin{equation}
			\label{eq:E-A}
			\E_\ell^A: \exists i \in E_\ell^A\ \text{with} \ \mu_i^A \in S_{\ell+1}^A, \ \text{s.t.}\ \Theta_i^A \not\in G_{\ell+1}^A.
		\end{equation}
		Intuitively, $\E_\ell^A$ stands for the case that there exists an arm in $E_\ell^A$ whose mean is in $S_{\ell+1}^A$ such that the sequence of pull outcomes on the arm is {\em not} good in the first round.  The following claim shows that $\E_\ell^A$ is unlikely to happen.  Its proof will be given shortly.
		
		\begin{claim}
			\label{cla:good-history}
			For any $\ell = 6(\ell+1), \dots, L+2$	, $\Pr_{I \sim \nu, \W^A}[{\E^A_\ell}] \le n^{-7}$.
		\end{claim}
		
		Consider any fixed $\ell \in \{6(\ell+1), \dots, L+2\}$.
		By the law of total probability, we write
		\begin{eqnarray}
			\Pr_{I \sim \nu, \W^A}[E^A_\ell \cap T \neq \emptyset ] &= &\Pr_{I \sim \nu, \W^A}\left[E^A_\ell \cap T \neq \emptyset, \neg \E^A_\ell \right] + \nonumber \\
			&& \Pr_{I \sim \nu, \W^A}[E^A_\ell \cap T \neq \emptyset, \E^A_\ell]. 	\label{eq:n-1}
		\end{eqnarray}
		By Claim~\ref{cla:good-history}, we have
		\begin{equation}
			\label{eq:n-11}
			\Pr_{I \sim \nu, \W^A}[E^A_\ell \cap T \neq \emptyset, \E^A_\ell] \le \Pr_{I \sim \nu, \W^A}[\E^A_\ell] \le n^{-7}.
		\end{equation}
		We further expand the first term in the RHS of (\ref{eq:n-1}):
		\begin{eqnarray}
			&&\Pr_{I \sim \nu, \W^A}\left[E^A_\ell \cap T \neq \emptyset, \neg \E^A_\ell\right] \nonumber \\
			&=& \sum_{h^A} \left(\Pr_{I \sim \nu}[E^A_\ell \cap T \neq \emptyset, \neg \E^A_\ell \ |\ \W^A = h^A]   \Pr_{I \sim \nu, \W^A}[\W^A = h^A] \right) \nonumber \\
			&\le& \sum_{h^A} \left(\sum_{i \in E_\ell^A} \Pr_{I \sim \nu}[\mu_i = \mu_*, \neg \E^A_\ell \ |\ \W^A = h^A]   \Pr_{I \sim \nu, \W^A}[\W^A = h^A] \right). \nonumber \\ 
			&& \label{eq:n-2}
		\end{eqnarray}
		We next bound $\Pr_{I \sim \nu}[\mu_i = \mu_*, \neg \E^A_\ell \ |\ \W^A = h^A]$ for each fixed pull history $h^A$ (or, fixed $\theta_i^A$ for arm $i$).  Since $\mu_i^B$ is independent of $(\mu_i^A, \W^A, \E_\ell^A)$, we have
		\begin{eqnarray}
			&& \Pr_{I \sim \nu}[\mu_i = \mu_*, \neg \E^A_\ell \mid \W^A = h^A] \nonumber \\
			&=& \Pr_{I \sim \nu}[\mu^B_i = \mu^B_*] \cdot \Pr_{I \sim \nu}[\mu^A_i = \mu^A_*, \neg \E^A_\ell \mid \W^A = h^A].
			\label{eq:n-3}
		\end{eqnarray}
		Since $\sigma_i^B \in \D_{6r}^B$, by Lemma~\ref{lem:best-local-arm-2} we have
		\begin{equation}
			\label{eq:n-4}
			\Pr_{I \sim \nu}[\mu^B_i = \mu^B_*] = \Pr_{\mu^B_i \sim \sigma^B_i}[\mu^B_i = \mu^B_*] \le e^{\frac{24 r}{\eta}} \cdot \eta^{-2 d_0},
		\end{equation}
		where $d_0 = \abs{\{k \in \mathbb Z \mid 6r < 2k \le L\}}$.
		
		For the second term in the RHS of (\ref{eq:n-3}), we have
		\begin{eqnarray}
			&&\Pr_{I \sim \nu}\left[\mu_i^A = \mu^A_*, \neg \E^A_\ell \mid \W^A = h^A\right] \nonumber \\ 
			&=& \Pr_{I \sim \nu}\left[ \mu^A_i = \mu^A_*, \neg \E^A_\ell \mid \mu_i^A \in S^A_{\ell + 1}, \W^A = h^A\right] \nonumber \\
			&& \times \Pr_{I \sim \nu}\left[\mu^A_i \in S^A_{\ell + 1} \mid \W^A = h^A\right] + \nonumber \\ 
			&&\  \Pr_{I \sim \nu}\left[\mu^A_i = \mu^A_*, \neg \E^A_\ell \mid \mu_i^A \not\in S^A_{\ell + 1}, \W^A = h^A\right] \nonumber \\
			&& \times \Pr_{I \sim \nu}\left[\mu^A_i \not\in S^A_{\ell + 1} \mid \W^A = h^A\right] \label{eq:k-2} \\
			&\le& \Pr_{I \sim \nu}\left[ \mu^A_i = \mu^A_*, \neg \E^A_\ell \mid \mu_i^A \in S^A_{\ell + 1}, \W^A = h^A\right] \cdot 1  + 0   \label{eq:k-3} \\
			&\le& \Pr_{I \sim \nu}\left[ \mu^A_i = \mu^A_*, \Theta_i^A \in G_{\ell+1}^A \mid \mu_i^A \in S^A_{\ell + 1}, \W^A = h^A\right]  \label{eq:k-4} \\
			&=& \Pr_{\mu_i^A \sim \sigma_i^A}\left[ \mu^A_i = \mu^A_*, \Theta_i^A \in G_{\ell+1}^A \mid \mu_i^A \in S^A_{\ell + 1}, \Theta_i^A = \theta_i^A\right]  \label{eq:k-5} \\
			&\le& e^{\frac{4(\ell + 1)}{\eta}} \cdot \eta^{-2 d_1},  \label{eq:k-6}
		\end{eqnarray}
		where $d_1 = \abs{\{k \in \mathbb Z \mid \ell + 1 < 2k  + 1 \le L\}}$.
		From (\ref{eq:k-2}) to (\ref{eq:k-3}) we have used the fact that if $\mu_i^A = \mu_*^A$, it is impossible that $\mu_i^A \not\in S_\ell^A$. From (\ref{eq:k-3}) to (\ref{eq:k-4}), we have used the definition of $\E_\ell^A$ in (\ref{eq:E-A}).  From (\ref{eq:k-4}) to (\ref{eq:k-5}), we have used the independence of $\mu_1^A, \ldots, \mu_\kappa^A$.  From (\ref{eq:k-5}) to (\ref{eq:k-6}), if $\theta_i^A \not\in G_{\ell+1}^A$, then (\ref{eq:k-5}) is $0$; otherwise if $\theta_i^A \in G_{\ell+1}^A$, then by  Lemma~\ref{lem:distribution-class} we know that $(\mu_i^A\ |\ \mu_i^A \in S_{\ell+1}^A, \text{good } \theta_i^A) \in \D_{\ell+1}^A$, and the inequality follows by Lemma~\ref{lem:best-local-arm}.
		
		Plugging (\ref{eq:n-4}) and (\ref{eq:k-6}) to (\ref{eq:n-3}), we have
		\begin{eqnarray}
			\Pr_{I \sim \nu}[\mu_i = \mu_*, \neg \E^A_\ell \mid \W^A = h^A] &\le& e^{\frac{4(\ell+1+6r)}{\eta}} \cdot \eta^{-2(d_0+d_1)} \nonumber \\ &\le& e^{\frac{8(L+2)}{\eta}}  \cdot \eta^{-2(d_0+d_1)},
			\label{eq:o-1}
		\end{eqnarray}
		where in the last inequality we have used the fact that $\ell \ge 6(r+1)$ (by the definition of $E^A$ in (\ref{eq:set-E})) and our focus $\ell \le L+2$.  
		
		For $d_0$ and  $d_1$, we have
		\begin{eqnarray}
			&&d_0 + d_1  \nonumber \\
			&=& \abs{\{k \in \mathbb Z \mid 6r < 2 k \le L\}} + \abs{\{k \in \mathbb Z \mid \ell + 1 < 2 k  + 1\le L\}}  \nonumber
			\\ &=& \abs{\{k \in \mathbb Z \mid 6r < 2 k \le \ell + 1\}} + \abs{\{k \in \mathbb Z \mid \ell + 1 < k \le L\}} \nonumber \\
			& \ge & 3 + L - \ell. \label{eq:o-2}
		\end{eqnarray}
		Plugging (\ref{eq:o-2}) back to (\ref{eq:o-1}), we have
		\begin{equation}
			\label{eq:o-3}
			\Pr_{I \sim \nu}[\mu_i = \mu_*, \neg \E^A_\ell \mid \W^A = h^A] \le e^{\frac{8(L+2)}{\eta}} \cdot \eta^{-2(3+L-\ell)}.
		\end{equation}
		
		Plugging (\ref{eq:o-3}) to (\ref{eq:n-2}) and using (\ref{eq:l-2}), we have 
		\begin{eqnarray}
			&&\Pr_{I \sim \nu, \W^A}\left[E^A_\ell \cap T \neq \emptyset, \neg \E^A_\ell\right] \nonumber \\ &=& \sum_{h^A} \left(\sum_{i \in E_\ell^A} \Pr_{I \sim \nu}[\mu_i = \mu_*, \neg \E^A_\ell \ |\ \W^A = h^A] \cdot \Pr_{I \sim \nu, \W^A}[\W^A = h^A] \right)  \nonumber \\
			&\le&  \left(2 \eta^{-\frac{1}{2}}\eta^{2(L - \ell) + 4}  L\right) \cdot \left(e^{\frac{8(L+2)}{\eta}} \eta^{-2(3+L-\ell)}\right)  \nonumber \\
			&\le& 2e^{\frac{8(L+2)}{\eta}} L \eta^{-\frac{5}{2}}. \label{eq:o-6}
		\end{eqnarray}
		
		Finally, we have
		\begin{eqnarray*}
			&&\Pr_{I \sim \nu, \W^A}\left[T \cap E^A \neq \emptyset\right] \nonumber \\ &\stackrel{(\ref{eq:l-1})}{=}& \sum_{\ell =6 (r + 1)}^{\infty} \Pr_{I \sim \nu, \W^A}\left[T \cap E^A_\ell  \neq \emptyset \right] \\
			&\stackrel{(\ref{eq:l-3})}{=}& \sum_{\ell =6 (r + 1)}^{L+2} \Pr_{I \sim \nu, \W^A}\left[T \cap E^A_\ell  \neq \emptyset \right] \\
			&\stackrel{(\ref{eq:n-1})}{=}& \sum_{\ell =6 (r + 1)}^{L+2} \left(\Pr_{I \sim \nu, \W^A}\left[E^A_\ell \cap T \neq \emptyset, \neg \E^A_\ell \right] \right.\\
			&&\left. + \Pr_{I \sim \nu, \W^A}[E^A_\ell \cap T \neq \emptyset, \E^A_\ell] \right)\\
			&\stackrel{(\ref{eq:n-11}), (\ref{eq:o-6})}{=}& (L+2) \cdot \left(2e^{\frac{8(L+2)}{\eta}} L \eta^{-\frac{5}{2}} + n^{-7}\right) \\
			&\le& 2e^{\frac{10L}{\eta}} L^2 \eta^{-\frac{5}{2}} + n^{-6}.
		\end{eqnarray*}
	\end{proof}
	
	\begin{proof}[Proof of Claim~\ref{cla:good-history}]
		Fix any $\ell \in \{6(\ell+1), \ldots, L+2\}$. For each $i \in E_\ell^A$, we have
		\begin{eqnarray}
			&&\Pr_{\mu_i^A \sim \sigma_i^A, \Theta_i^A} \left[\Theta_i^A \not\in G_{\ell+1}^A \ \left|\  \mu_i^A \in S_{\ell+1}^A \right. \right] \label{eq:p-1} \\
			&\le& \sum_{q = \gamma \eta^{2(\ell-1)}+1}^{\gamma \eta^{2\ell}} \left(\Pr_{\mu_i^A \sim \sigma_i^A, \Theta_i^A \sim \Ber(\mu_i^A)^{\otimes q}} \left[\left. \Theta_i^A \not\in G_{\ell+1}^A \ \right|\ \mu_i^A \in S_{\ell+1}^A \right] \right) \nonumber \\
			&& \label{eq:p-2} \\
			&\le&  \gamma \eta^{2\ell} \cdot n^{-10}   \label{eq:p-3} \\
			&\le& n^{-8},  \label{eq:p-4}
		\end{eqnarray}
		where in (\ref{eq:p-1}), $\Theta_i^A$ is distributed according to $\Ber(\mu_i^A)^{\otimes Q}$ for a random variable $Q \in (\gamma \eta^{2(\ell-1)}, \gamma \eta^{2\ell}]$ (determined by $\W^A$).
		From (\ref{eq:p-1}) to (\ref{eq:p-2}), we have used a union bound on all possible values of $Q = q$.
		From (\ref{eq:p-2}) to (\ref{eq:p-3}), we have used Lemma~\ref{lem:good-pull-outcome}; note that $q \in (\gamma \eta^{2(\ell-1)}, \gamma \eta^{2\ell}] \subseteq [\eta^3, \frac{\eta^{2(\ell+1)-1}}{2^7}]$, satisfying the requirement of Lemma~\ref{lem:good-pull-outcome}.
		
		By (\ref{eq:p-4}) and the definition of event $\E_\ell^A$ in (\ref{eq:E-A}) we have 
		\begin{eqnarray*}
			\Pr_{I \sim \nu, \W^A}[{\E_\ell^A}] &\le& \sum_{i \in E_\ell^A} \Pr_{\mu_i^A \sim \sigma_i^A, \Theta_i^A} \left[\Theta_i^A \not\in G_{\ell+1}^A \ \left|\ \mu_i^A \in S_{\ell+1}^A \right. \right] \\
			&\le& n \cdot n^{-8} = n^{-7}.
		\end{eqnarray*}
	\end{proof}
\else
\fi

The following lemma upper bounds the second probability. It is symmetric to Lemma~\ref{lem:publish-alice}, and can be proved using a similar line of arguments.  

\begin{lemmaprime}{lem:publish-alice}
	\label{lem:publish-bob}
		$	\Pr_{I \sim \nu, \W^A, \W^B}\left[T \cap E^B \neq \emptyset\right] \le  2e^{\frac{10L}{\eta}} L^2 \eta^{-\frac{5}{2}} + n^{-6}. $
\end{lemmaprime}

The next lemma upper bounds the third probability.
\begin{lemma}
	\label{lem:third-term}
	Let $\tilde{\A}$ be the $(R-(r+1))$-round algorithm obtained from $\A$, conditioned on the pull history of the first round being $\W^A$ and $\W^B$. We have
	\begin{equation*}
		\Pr_{{I \sim \nu, \W^A, \W^B}}\left[\tilde{A} \text{ succeeds on } \left(P^A \cap P^B\right) \backslash E\right] \le \lambda_{r + 1} + 2  n^{-9}.
	\end{equation*}
\end{lemma}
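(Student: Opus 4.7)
The plan is a standard application of the generalized round elimination induction: we show that, after the augmentation, the joint posterior of the local means of the arms in $(P^A \cap P^B) \setminus E$ lies (with high probability) in $\D_{6(r+1)}^{\otimes \kappa'}$ for some $\kappa' \le n$, and then invoke the definition of $\lambda_{r+1}$. The central observation is that after augmentation step~2, every arm in $[\kappa] \setminus E$ has been pulled exactly $q = \gamma \eta^{2(6(r+1)-1)}$ times by each of Alice and Bob; a direct check shows $q \in [\eta^3, \eta^{2 \cdot 6(r+1)-1}/2^7]$, which is precisely the window in which Lemmas~\ref{lem:good-pull-outcome}, \ref{lem:good-pull-outcome-2}, \ref{lem:distribution-class}, and \ref{lem:distribution-class-2} are valid at level $\ell = 6(r+1)$.

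First I would define the bad events
$$\E_A \triangleq \{\exists i \in [\kappa] \setminus E : \mu_i^A \in S_{6(r+1)}^A \wedge \Theta_i^A \notin G_{6(r+1)}^A\},$$
and $\E_B$ symmetrically. Applying Lemma~\ref{lem:good-pull-outcome} to each arm individually, with the conditional prior $(\sigma_i^A \mid \mu_i^A \in S_{6(r+1)}^A)$ which has support $S_{6(r+1)}^A$, and then union-bounding over the $\le n$ arms yields $\Pr[\E_A] \le n \cdot n^{-10} = n^{-9}$, and analogously $\Pr[\E_B] \le n^{-9}$ via Lemma~\ref{lem:good-pull-outcome-2}.

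Next, condition on $\neg \E_A \wedge \neg \E_B$. For each $i \in (P^A \cap P^B) \setminus E$, we have $\mu_i^A \in S_{6(r+1)}^A$ by definition of $P^A$ and $\Theta_i^A \in G_{6(r+1)}^A$ by $\neg \E_A$, while the prior $\sigma_i^A$ belongs to $\D_{6r}^A$ because $\nu \in \D_{6r}^{\otimes \kappa}$; Lemma~\ref{lem:distribution-class} with $\ell = 6r$ and $k = 6(r+1)$ then places the posterior of $\mu_i^A$ in $\D_{6(r+1)}^A$, and Lemma~\ref{lem:distribution-class-2} gives the symmetric statement for Bob. Since the priors are independent across arms and across $(A,B)$, and since the likelihood $\prod_i p(\theta_i^A \mid \mu_i^A) \cdot \prod_i p(\theta_i^B \mid \mu_i^B)$ factorizes across arms (the identity of each pull along the observed history is a deterministic function of previously observed outcomes, so the random number of pulls on each arm does not couple the posteriors), the joint posterior of the surviving local means is a product distribution in $\D_{6(r+1)}^{\otimes \kappa'}$ with $\kappa' = |(P^A \cap P^B) \setminus E| \le n$. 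Because $\tilde{\A}$ inherits from $\A$ both the remaining $R - r - 1 = R - (r+1)$ rounds and an overall time budget of at most $2\zeta \eta^{2+2L}L$, the definition of $\lambda_{r+1}$ yields $\Pr[\tilde{\A} \text{ succeeds} \mid \neg \E_A \wedge \neg \E_B, \W^A, \W^B] \le \lambda_{r+1}$, and a final union bound over $\E_A \cup \E_B$ gives the claimed $\lambda_{r+1} + 2n^{-9}$.

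The main subtlety is confirming the product structure of the posterior under adaptive pulling: the number of pulls each arm receives is random and history-dependent, so one must verify that the likelihood still factorizes across arms. This is precisely where the implicit, ratio-based definitions of $\D_\ell^A$ and $\D_\ell^B$ in~(\ref{eq:h-1}) and~(\ref{eq:h-2}) pay off, since Lemma~\ref{lem:distribution-class} shows that membership in the class is preserved on \emph{every} good pull sequence, regardless of the particular adaptive trajectory that produced it.
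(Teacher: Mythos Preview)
Your proposal is correct and follows essentially the same approach as the paper. Your bad events $\E_A,\E_B$ coincide with the paper's $\chi^A,\chi^B$ (since $P^A\setminus E=\{i\in[\kappa]\setminus E:\mu_i^A\in S_{6(r+1)}^A\}$), the $n^{-9}$ bounds come from Lemma~\ref{lem:good-pull-outcome} and its primed version exactly as in the paper's Lemma~\ref{lem:chi-alice}, and the passage to $\D_{6(r+1)}$ via Lemmas~\ref{lem:distribution-class}/\ref{lem:distribution-class-2} followed by the definition of $\lambda_{r+1}$ matches the paper's argument; your extra remark on why the posterior factorizes under adaptive pulling is a welcome clarification the paper leaves implicit.
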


Before proving Lemma~\ref{lem:third-term}, we begin with some preparation.
Define two events 
\begin{eqnarray}
	\chi^A : \exists i \in P^A \backslash E\quad \text{s.t.} \quad \Theta^A_i \not\in G^A_{6(r + 1)}, \label{eq:q-1} \\
	\chi^B : \exists i \in P^B \backslash E\quad \text{s.t.}\quad \Theta^B_i \not\in G^B_{6(r + 1)}.	 \label{eq:q-2}
\end{eqnarray}
In the next two lemmas, we show that $\chi^A$ and $\chi^B$ do {\em not} happen with high probability.

\begin{lemma}
	\label{lem:chi-alice}
		$\Pr_{I \sim \nu, \W^A, \W^B}[\chi^A] \le n^{-9}.$
\end{lemma}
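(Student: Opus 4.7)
The plan is to bound $\Pr[\chi^A]$ by a union bound over arms and reduce the statement to a per-arm application of Lemma~\ref{lem:good-pull-outcome}. By definition of $\chi^A$,
\[
\Pr[\chi^A] \le \sum_{i \in [\kappa]} \Pr\bigl[i \in P^A \setminus E,\ \Theta_i^A \notin G_{6(r+1)}^A\bigr],
\]
so it suffices to show each summand is at most $n^{-10}$; together with $\kappa \le n$ this yields the claimed $n^{-9}$.

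Fix an arm $i$. The key observation is that augmentation step (2) deterministically normalises the pull count of any arm outside $E$ to $q^* := \gamma \eta^{2(6(r+1)-1)} = \eta^{12r+11}/2^7$: if $i \notin E^A$ then before augmentation $t_i^A \le q^*$, and after augmentation $t_i^A = q^*$ exactly. I would therefore introduce the augmented sequence $\tilde\Theta_i^A$ of fixed length $q^*$ (which coincides with $\Theta_i^A$ whenever $i \notin E$) and bound
\[
\Pr\bigl[i \in P^A \setminus E,\ \Theta_i^A \notin G_{6(r+1)}^A\bigr] \le \Pr\bigl[\mu_i^A \in S_{6(r+1)}^A,\ \tilde\Theta_i^A \notin G_{6(r+1)}^A\bigr].
\]
To invoke Lemma~\ref{lem:good-pull-outcome} with $\ell = 6(r+1)$ on the right-hand side, I would use the standard bandit coupling: conditional on $\mu_i^A = x$, every pull of arm $i$---whether scheduled adaptively in the first round or appended in step (2)---consumes the next element of a pool of i.i.d.\ $\Ber(x)$ samples, so $\tilde\Theta_i^A$ given $\mu_i^A$ is distributed as $\Ber(\mu_i^A)^{\otimes q^*}$. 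Further conditioning the prior $\sigma_i^A \in \D_{6r}^A$ on $\mu_i^A \in S_{6(r+1)}^A$ gives a distribution with support contained in $S_{6(r+1)}^A$, which is precisely the setting of Lemma~\ref{lem:good-pull-outcome}. A short parameter check using $\eta \ge \log^2 n$ confirms $q^* \in [\eta^3, \eta^{2\ell-1}/2^7]$ (in fact $q^*$ meets the upper boundary), so the lemma delivers the per-arm bound $n^{-10}$.

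The main subtlety, though routine, lies in the bookkeeping around the adaptive first-round pulls: the raw count $t_i^A$ is a history-dependent stopping time, so one must justify that the concatenation of adaptive pulls with the deterministic padding from step (2) can legitimately be analysed as an i.i.d.\ $\Ber(\mu_i^A)^{\otimes q^*}$ draw. The coupling sketched above, together with the flexibility of Lemma~\ref{lem:good-pull-outcome} in tolerating any distribution whose support lies inside $S_\ell^A$, closes this gap and yields the claim.
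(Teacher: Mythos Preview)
Your proposal is correct and follows essentially the same approach as the paper: a union bound over the (at most $n$) arms, then a per-arm application of Lemma~\ref{lem:good-pull-outcome} at level $\ell=6(r+1)$ with the fixed pull count $q^*=\gamma\eta^{2(6(r+1)-1)}$ guaranteed by augmentation step~(2). If anything, your write-up is more careful than the paper's, which simply asserts that $\Theta_i^A \sim \Ber(\mu_i^A)^{\otimes q}$ without discussing the adaptive-stopping issue; your coupling remark and the observation that conditioning $\sigma_i^A$ on $\{\mu_i^A \in S_{6(r+1)}^A\}$ produces a distribution supported on $S_{6(r+1)}^A$ make explicit exactly what the paper leaves implicit.
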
 

\begin{proof}
	Recall that each arm in $P^A \backslash E$ has been pulled for $q = \gamma \eta^{2(6(r+1)-1)} \in [\eta^3, \frac{\eta^{2(6(r+1))-1}}{2^7}]$ times. 
	\begin{eqnarray}
		&&\Pr_{I \sim \nu, \W^A, \W^B}[\chi^A] \nonumber \\ &\le&  \sum_{i=1}^\kappa   \Pr_{\mu^A_i \sim \sigma_i^A, \Theta_i^A  \sim \Ber(\mu_i^A)^{\otimes q}}\left[ \Theta_i^A \not\in G^A_{6(r + 1)}\ \left|\ \mu_i^A \in S^A_{6(r + 1)} \right.\right] \nonumber \\ &&\label{eq:r-2} \\
		&\le& n \cdot n^{-10}  = n^{-9}, \label{eq:r-3}
	\end{eqnarray}
	where from (\ref{eq:r-2}) to (\ref{eq:r-3}) we have used Lemma~\ref{lem:good-pull-outcome}.
\end{proof}

The following lemma is symmetric to Lemma~\ref{lem:chi-alice}, and can be proved using a similar line of arguments.
\begin{lemmaprime}{lem:chi-alice}
	\label{lem:chi-bob}
		$		\Pr_{I \sim \nu, \W^A, \W^B}[\chi^B] \le n^{-9}. $
\end{lemmaprime} 

\begin{proof}[Proof of Lemma~\ref{lem:third-term}]
	For the convenience of writing, we further introduce the following event.
	\begin{equation}
		\psi: \tilde{\A}\ \text{succeeds on}\ \left(P^A \cap P^B\right) \backslash E.
	\end{equation}
	We write
	\begin{eqnarray}
		&&\Pr_{I \sim \nu, \W^A, \W^B}[\psi] \nonumber \\
		& \le & \Pr_{I \sim \nu, \W^A, \W^B}[\psi, \neg \chi^A, \neg \chi^B] + \Pr_{I \sim \nu, \W^A, \W^B}[\chi^A] \nonumber \\ 
		&& + \Pr_{I \sim \nu, \W^A, \W^B}[\chi^B]  \label{eq:s-1} \\
		& \le & \Pr_{I \sim \nu, \W^A, \W^B}[\psi, \neg \chi^A, \neg \chi^B] + 2 n^{-9} \label{eq:s-2} \\
		& = & \sum_{(h^A, h^B)} \left( \Pr_{I \sim \nu}[\psi, \neg \chi^A, \neg \chi^B\ |\ (\W^A, \W^B) = (h^A, h^B)] \right. \nonumber \\
		&& \left.  \Pr_{\W^A, \W^B} [(\W^A, \W^B) = (h^A, h^B)] \right) + 2 n^{-9}, \label{eq:s-3}
	\end{eqnarray}
	where from (\ref{eq:s-1}) to (\ref{eq:s-2}) we have used Lemma~\ref{lem:chi-alice} and Lemma~\ref{lem:chi-bob}.
	
	Consider a fixed pull history $(h^A, h^B)$. For any $i \in (P^A \cap P^B) \backslash E$, its sequence of pull outcomes $(\theta_i^A, \theta_i^B)$ in the first round is fully determined by $(h^A, h^B)$.  We consider two cases.
	\medskip
	
	\noindent{\em Case I: $\chi^A$ {\em or} $\chi^B$ holds.}\ \  In this case, we have 
	\begin{equation}
		\label{eq:s-4}
		\Pr_{I \sim \nu}[\psi, \neg \chi^A, \neg \chi^B\ |\ (\W^A, \W^B) = (h^A, h^B)] = 0.
	\end{equation}

	\noindent{\em Case II: $\neg \chi^A$ {\em and} $\neg \chi^B$ holds.} \ \ 
	In this case, by the definition of $\chi^A$ in (\ref{eq:q-1}) and $\chi^B$ in  (\ref{eq:q-2}), we have for any $i \in (P^A \cap P^B) \backslash E$, $\theta_i^A \in G_{6(r+1)}^A$ and $\theta_i^B \in G_{6(r+1)}^B$.
	The posterior distribution of the local mean of arm $i$ at Alice's side can be written as
	\begin{eqnarray*}
		\tilde{\sigma}_i^A  &=& \left(\sigma_i^A \ \left|\ \mu_i^A \in S^A_{6(r + 1)}, (\W^A, \W^B) = (h^A, h^B) \right.\right) \\ &=& \left(\sigma_i^A \mid \mu_i^A \in S_{6(r + 1)}^A, \Theta_i^A = \theta_i^A \in G_{6(r+1)}^A \right) \in \D^A_{6(r + 1)}. \label{eq:t-1}
	\end{eqnarray*}
	Similarly, the posterior distribution of the local mean of arm $i$ at Bob's side can be written as
	\begin{equation*}
		\tilde{\sigma}_i^B = \left(\sigma_i^B \ \left|\ \mu_i^B \in S^B_{6(r + 1)},  (\W^A, \W^B) = (h^A, h^B) \right.\right) \in \D_{6(r + 1)}^B. \label{eq:t-2}
	\end{equation*}
	Thus, for any $i \in (P^A \cap P^B) \backslash E$,  we have $\tilde{\sigma}_i = (\tilde{\sigma}_i^A, \tilde{\sigma}_i^B) \in \D_{6(r+1)}$.  
	Recall that $\tilde{\A}$ is a $(R - (r+1))$-round algorithm working on a set of arms $(P^A \cap P^B) \backslash E$ with $\tilde{\kappa} = \abs{(P^A \cap P^B) \backslash E} \le n$, conditioned on the first round pull history being $(\W^A, \W^B)$. By the definition of $\lambda_{r+1}$ in (\ref{eq:j-6}) and the fact that conditioned on the pull history $(\W^A, \W^B)$, the distribution of the $\tilde{\kappa}$ arms belongs to ${\tilde{\sigma}}^{\otimes \tilde{\kappa}} \in \D_{6(r+1)}^{\otimes \tilde{\kappa}}$,
	\begin{equation}
		\label{eq:u-1}
		\Pr_{I \sim \nu}[\psi, \neg \chi^A, \neg \chi^B\ |\ (\W^A, \W^B) = (h^A, h^B)]  \le \lambda_{r+1}.
	\end{equation} 
	\smallskip
	
	Combining (\ref{eq:s-3}), (\ref{eq:s-4}) and (\ref{eq:u-1}), we have
	\begin{eqnarray*}
		\label{eq:u-2}
		&& \Pr_{I \sim \nu, \W^A, \W^B}[\psi]  \nonumber \\
		&\le& \sum_{(h^A, h^B)} \left( \lambda_{r+1}
		\Pr_{\W^A, \W^B} [(\W^A, \W^B) = (h^A, h^B)] \right) + 2 n^{-9} \nonumber \\
		&\le&  \lambda_{r+1} + 2n^{-9}.
	\end{eqnarray*} 
\end{proof}

\noindent{\bf Summing Up.\ \ }
Combining Lemma~\ref{lem:publish-alice},  Lemma~\ref{lem:publish-bob}, and Lemma~\ref{lem:third-term}, 
\begin{eqnarray}
	&&\Pr_{I \sim \nu, \W^A, \W^B}[\A\ \text{succeeds on}\ I] \nonumber \\
	&\le&  \Pr_{I \sim \nu, \W^A, \W^B}[T \cap E^A \neq \emptyset] + \Pr_{I \sim \nu, \W^A, \W^B}[T \cap E^B \neq \emptyset] \nonumber  \\
	&& \quad + \Pr_{I \sim \nu, \W^A, \W^B}\left[\tilde{A} \text{ succeeds on } \left(P^A \cap P^B\right) \backslash E\right] \label{eq:v-1} \\
	&\le&  4\left(e^{\frac{10L}{\eta}} L^2 \eta^{-\frac{5}{2}} + n^{-6}\right) + (\lambda_{r+1} + 2n^{-9}) \label{eq:v-2} \\
	&\le& \lambda_{r+1} + 4e^{\frac{10L}{\eta}} L^2 \eta^{-\frac{5}{2}} + n^{-5}.  \label{eq:v-3}
\end{eqnarray}
Since (\ref{eq:v-3}) holds for any algorithm $\A$, distribution $\nu$, and $\kappa \in [n]$, we have $\lambda_r \le \lambda_{r+1} + 4e^{\frac{10L}{\eta}} L^2 \eta^{-\frac{5}{2}} + n^{-5}$.

\vspace{2mm}
\noindent{\bf The Base Case.\ \ }
In the base case we consider $0$-round algorithm (i.e., when $r = R$).  We have the following lemma.
\begin{lemma}
	\label{lem:base-case}
	For $R = \frac{L}{6}$, $\lambda_R \le e^{\frac{48R}{\eta}} \eta^{-2}$.
\end{lemma}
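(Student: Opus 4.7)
The plan is to reduce, via Yao's minimax principle, to the analysis of a deterministic zero-round algorithm, and then directly bound the probability that the single arm it outputs coincides with the true best arm. The key ingredients are the local-best-arm estimates established in Lemmas~\ref{lem:best-local-arm} and \ref{lem:best-local-arm-2}, together with the independence between the Alice-side and Bob-side local means guaranteed by the product structure of $\D_{6R}$.

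Since $\lambda_R$ concerns a $(R-R)=0$-round algorithm that makes no pulls and exchanges no messages, by Yao's principle I may assume the algorithm is deterministic and outputs a fixed index $i^*\in[\kappa]$ that depends only on $\kappa$. Its success probability on any $\nu=(\sigma_1,\ldots,\sigma_\kappa)\in\D_{6R}^{\otimes\kappa}$ is therefore
\begin{equation*}
\Pr_{I\sim\nu}[\mu_{i^*}=\mu_*]=\Pr_{\mu_{i^*}\sim\sigma_{i^*}}[\mu_{i^*}=\mu_*],
\end{equation*}
so it suffices to bound this single-arm probability for an arbitrary $\sigma_{i^*}=(\sigma_{i^*}^A,\sigma_{i^*}^B)\in\D_{6R}$. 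I would next use the structural identity that $\mu=\mu_*$ holds if and only if every Bernoulli $X_\ell$ in the mean decomposition equals $1$; since Alice owns the odd-indexed $X$'s and Bob owns the even-indexed ones, this is equivalent to $\mu^A=\mu_*^A$ and $\mu^B=\mu_*^B$ occurring simultaneously. As $(\sigma_{i^*}^A,\sigma_{i^*}^B)$ is a product by the definition of $\D_{6R}$, the joint probability factorizes into $\Pr_{\mu_{i^*}^A\sim\sigma_{i^*}^A}[\mu_{i^*}^A=\mu_*^A]\cdot\Pr_{\mu_{i^*}^B\sim\sigma_{i^*}^B}[\mu_{i^*}^B=\mu_*^B]$.

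I would then bound the two factors directly using Lemma~\ref{lem:best-local-arm} and Lemma~\ref{lem:best-local-arm-2}. Each local-best-arm estimate at level $\ell=6R$ contributes an exponential correction $e^{24R/\eta}$ together with a main term $\eta^{-2 d_1}$ (respectively $\eta^{-2 d_0}$), where $d_0$ and $d_1$ count the remaining even and odd integers in $\{6R+1,\ldots,L\}$. Multiplying these two bounds yields $e^{48R/\eta}\eta^{-2(d_0+d_1)}$, from which the stated $e^{48R/\eta}\eta^{-2}$ follows after a direct arithmetic check of the residual-level count.

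The main delicate point is the boundary behaviour at $\ell=6R=L$: the naive count gives $d_0=d_1=0$ and the raw product is only $e^{48R/\eta}$, missing the claimed $\eta^{-2}$ factor. I expect to recover this factor by exploiting the inclusion chain $S_{6R}^A\subseteq S_{6R-2}^A$ and $S_{6R}^B\subseteq S_{6R-2}^B$ to apply the local-best-arm lemmas at the slightly smaller level $\ell=6R-2$, where $d_0+d_1\ge 1$ forces an $\eta^{-2}$ factor to appear, while the additive $O(1/\eta)$ slack in the exponent is harmlessly absorbed into the prefactor $e^{48R/\eta}$. This boundary adjustment is the principal subtlety; once it is in place, the remainder of the argument is routine.
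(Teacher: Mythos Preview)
Your approach up to the product bound $e^{48R/\eta}\eta^{-2(d_0+d_1)}$ is exactly the paper's: factor over the independent $A$- and $B$-sides and apply Lemma~\ref{lem:best-local-arm} and Lemma~\ref{lem:best-local-arm-2} with $\ell=6R$. The paper simply writes down the final bound $e^{48R/\eta}\eta^{-2}$ after noting $d_0+d_1=\abs{\{k:6R<k\le L\}}$, without addressing the boundary case you flag.

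However, your proposed fix does not work, and in fact the obstacle you identified is not a mere subtlety but a genuine defect of the lemma as stated. With $L=6R$ (as the paper fixes in Section~\ref{sec:lb}), the set $S_{6R}^A=S_L^A$ is the singleton $\{\mu_*^A\}$: in the definition of $S_\ell^A$ the second summation is empty. Hence every $\sigma^A\in\D_L^A$ is the point mass at $\mu_*^A$, and likewise every $\sigma^B\in\D_L^B$ is the point mass at $\mu_*^B$. For any $\nu\in\D_L^{\otimes\kappa}$, every arm has mean $\mu_*$ almost surely, so a $0$-round algorithm that outputs any fixed index succeeds with probability $1$; thus $\lambda_R=1$, and no $\eta^{-2}$ factor can be recovered. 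Your idea of ``dropping to level $6R-2$'' fails because the point mass $\sigma^A$ does not belong to $\D_{6R-2}^A$: for any $y\in S_{6R-2}^A\setminus\{\mu_*^A\}$ the ratio $\Pr_{\sigma^A}[\mu_*^A]/\Pr_{\sigma^A}[y]$ is infinite, violating the defining inequality of $\D_{6R-2}^A$, so Lemma~\ref{lem:best-local-arm} is simply inapplicable at that level.

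In short, the paper's own argument is the same as yours and shares the same gap; the statement cannot be proved as written. The natural repair is at the level of parameters or the induction endpoint (e.g., take $L=6R+c$ for a small constant $c\ge 1$, or stop the induction at $\lambda_{R-1}$, so that $d_0+d_1\ge 1$), rather than by any argument confined to the current $\D_{6R}$.
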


\begin{proof}
	Any $0$-round algorithm needs to output an arm $i$ as the best arm without making any pulls.  For any $i$ with mean $\mu_i \sim \sigma_i \in \D_{6R}$, by Lemma~\ref{lem:best-local-arm} and Lemma~\ref{lem:best-local-arm-2}, we have
	\begin{eqnarray}
		\label{eq:w-1}
		\Pr_{\mu_i \sim \sigma_i}[\mu_i = \mu_*] &=& \Pr_{\mu_i^A \sim \sigma_i^A}[\mu_i^A = \mu_*^A] \Pr_{\mu_i^B \sim \sigma_i^B}[\mu_i^B = \mu_*^B] \\
		&\le& e^{\frac{4\cdot 6R}{\eta}} \eta^{-2d_1} \cdot e^{\frac{4\cdot 6R}{\eta}} \eta^{-2d_0} \\
		&=& e^{\frac{48R}{\eta}} \eta^{-2(d_0+d_1)},
	\end{eqnarray}
	where $d_0 + d_1 = \abs{\{k\ |\ 6R < k \le L\}}$.  For $R = \frac{L}{6}$, we have
	$\Pr_{\mu_i \sim \sigma_i}[\mu_i = \mu_*] \le e^{\frac{48R}{\eta}} \eta^{-2}.$
\end{proof}

\vspace{2mm}
\noindent{\bf Putting Things Together (Proof for Theorem~\ref{thm:lb-two}).\ \ }
By Lemma~\ref{lem:base-case} and Lemma~\ref{lem:induction}, we have
\begin{eqnarray*}
	\label{eq:x-1}
	\lambda_0 &\le& \lambda_R + R \cdot \left(4e^{\frac{10L}{\eta}} L^2 \eta^{-\frac{5}{2}} + n^{-5}\right) \\
	&\le& e^{\frac{48R}{\eta}} \eta^{-2} + (L/6)\cdot \left(4e^{\frac{10L}{\eta}} L^2 \eta^{-\frac{5}{2}} + n^{-5}\right) \le \eta^{-1}.
\end{eqnarray*}
Therefore, any $R$-round collaborative algorithm that uses $2\zeta \eta^{2+2L} L$ time (i.e., each agent can make at most $2\zeta \eta^{2+2L} L$ pulls in total) can succeed with probability at most $\eta^{-1}$.

Recall the definition of event $\E_1$ in (\ref{eq:event-E-1}): $\exists$ a unique $i^* \in [n]$ such that $\mu_{i^*} = \mu_*$ {\em and} the instance complexity $H = H(I) \le 2\eta^{2+2L} L$ where $I \sim (\pi^{\otimes n}\ |\ \E_1)$. 

By Lemma~\ref{lem:E-1}, $\Pr[\E_1] \ge 1/(2e)$.  We thus have
\begin{eqnarray*} 
	\label{eq:x-2}
	\Pr_{I \sim (\pi^{\otimes n}|\E_1)}[\A\ \text{succeeds on}\ I] &\le& \frac{\Pr_{I \sim \pi^{\otimes n}}[\A\ \text{succeeds on}\ I]}{\Pr_{I \sim \pi^{\otimes n}}[\E_1]} \\
	&\le& \lambda_0 \cdot (2e) \\
	&\stackrel{(\ref{eq:x-1})}{\le}& \frac{2e}{\eta} < 0.9\ .
\end{eqnarray*}
Therefore, any $R$-round ($1 \le R \le \frac{\log n}{24\log\log n}$) collaborative algorithm that succeeds on input distribution $(\pi^{\otimes n}\ |\ \E_1)$ with probability at least $0.9$ needs time at least 
$2\zeta \eta^{2+2L} L \ge H \cdot \zeta \ge H \cdot n^{\frac{1}{25R}}.$ 

\subsection{General $K$}
\label{sec:general-K}

We now consider the general case where there are $K$ agents.  The following theorem is a restatement of Theorem~\ref{thm:lb-main}.

\begin{theorem}
	\label{thm:lb-general}
	For any $1 \le R \le \frac{\log n}{24\log\log n}$, any $R$-round $K$-agent algorithm that solves $n$-arm BAI in the heterogeneous CL mode with probability $0.99$ uses time at least $H n^{\Omega\left(\frac{1}{R}\right)}/K$.
\end{theorem}

\begin{proof}
	We prove the general $K$ case by a reduction from the $K = 2$ case.  Suppose there exists a $R$-round algorithm for BAI in the heterogeneous CL model with $n$ arms using $K$ agents and uses time smaller than $H n^{\frac{1}{26R}} / K$, we show that there also exists a $R$-round algorithm for the same problem using $2$ agents and uses time smaller than $H n^{\frac{1}{25R}}$, contradicting Theorem~\ref{thm:lb-two}.
	
	The reduction works as follows. Given any algorithm $\A$ for the $K$-agent case, we construct an algorithm $\A'$ for the $2$-agent case: We divide the $K$ agents to two groups each having $K/2$ agents. Let Alice simulate the first group, and Bob simulate the second group. In each round, the sequence of arm pulls Alice makes is simply the {\em concatenation} of arm pulls made by the $K/2$ agents that she simulates, and the sequence of arm pulls Bob makes is the concatenation of arm pulls made by the $K/2$ agents that he simulates. The messages sent by Alice in each communication step is a concatenation of the messages sent by agents in the group she simulates in the corresponding communication step in $\A$; similar for Bob.
	Now if $\A$ uses time at most $H n^{\frac{1}{26R}} / K$, then $\A'$ uses time at most
	$H n^{\frac{1}{26R}}/K \cdot (K/2) < H n^{\frac{1}{25R}},$
	contradicting to Theorem~\ref{thm:lb-two}.
\end{proof}

\section{The Algorithm}\label{sec:algo}

In this section, we present a CL algorithm that gives Theorem~\ref{thm:ub-main}.  Our algorithm is {\em non-adaptive}. It follows the successive elimination approach, and can be seen as a generalization of the algorithm for the heterogeneous CL setting in \cite{KZ23} to the entire time-round tradeoff curve.

Intuitively, we partition the learning process into $R$ rounds with predefined lengths $t_1, \ldots, t_R$.  In each round $r$, each of the $K$ agents simply pulls each remaining arm for $t_r$ times. At the end of each round, the $K$ agents communicate and compute the global empirical means of each arm, and then select the $n_r$ arms with the highest global empirical means and proceed to the next round, where $n_1, \ldots, n_R$ are also predefined.  We set $n_R$ to be $1$ so that at the end of the $R$-round, there will be just one arm left, which can be proven to be the best arm with high probability.

The algorithm is described in Algorithm~\ref{alg:ub}. It gives the following guarantees. 
\ifdefined\fullversion
\else
Due to the space constraints, we leave its proof to the full version of this paper~\cite{KZ24}.
\fi

\begin{algorithm}[t]
	\caption{\textsc{CL-Heterogeneous}$(I, R, T)$}\label{alg:ub}
	\textbf{Input:} a set of $n$ arms $I$, round parameter $R$, number of agents \(K\),
	and time horizon $T$. \\
	\textbf{Output: } the arm with the largest global mean. \\
	Initialize $I_0 = I$; \\
	set $T_0 \gets 0$, $T_r \gets \left\lfloor\frac{n^{r/R} T}{n^{1 + 1/R} R}\right\rfloor$ for $r = 1, \dotsc, R$; \\
	set $n_r \gets \left\lfloor \frac{n}{n^{r/R}}\right\rfloor$ for $r = 0, \dotsc, R - 1$, and $n_R \gets 1$; \\
	\For{$r = 0, 1, \dotsc, R-1$}{
		each agent pulls each arm in $I_r$ for $(T_{r+1} - T_r)$ times; \\
		the $k$-th agent computes the local empirical mean $\hat{\mu}^{(r)}_{i,k}$ for $i \in I_r$; \\
		let $\hat{\mu}^{(r)}_{i} \gets \frac{1}{K}\sum_{k \in [K]} \hat{\mu}^{(r)}_{i, k}$; \\
		let $I_{r + 1}$ be the set of $n_{r+1}$ arms in $I_r$ with the highest global empirical means $\hat{\mu}^{(r)}_{i}$\;
	}
	\Return the single element in $I_R$.
\end{algorithm}

\begin{theorem}\label{thm:ub}
	For any $R \ge 1$, Algorithm~\ref{alg:ub} solves BAI in the heterogeneous CL model with $K$ agents and $n$ arms using $T$ time steps and $R$ rounds, with a success probability at least
	\begin{equation}
		1 - 2nR \cdot \exp\left(-{KT}/{(2 n^{\frac{1}{R}} R H)}\right).
	\end{equation}
\end{theorem}

Note that Theorem~\ref{thm:ub-main} (in the introduction) is an immediate corollary of Theorem~\ref{thm:ub}.

\ifdefined\fullversion
	In the rest of this section, we prove Theorem~\ref{thm:ub}.
	
	It is clear that Algorithm~\ref{alg:ub} uses $R$ rounds. The learning time of the algorithm can be bounded by
	\begin{equation}
		\sum_{r=0}^{R-1} n_r (T_{r+1} - T_r) \le \sum_{r=0}^{R-1} n_r T_{r+1} \le \sum_{r = 0}^{R-1} \frac{n}{n^{r/R}} \cdot \frac{n^{r/R} T}{n R} \le T.
	\end{equation}
	
	We next bound the error probability of the algorithm. Let $\pi : [n] \to [n]$ be the bijection such that $\mu_{\pi(1)} \ge \dotsc \ge \mu_{\pi(n)}$. For $i \neq \pi(1)$, let
	\begin{equation}
		\Delta_i \triangleq \mu_{\pi(1)} - \mu_{i}\,,
	\end{equation}
	and for $i = \pi(1)$, let
	\begin{equation}
		\Delta_i \triangleq \mu_i - \mu_{\pi(2)}\,.
	\end{equation}
	We define the following event which we will condition on in the rest of the proof.
	\begin{equation}
		\E_a : \left\{\forall{r \in \{0, \dotsc, R-1\}, \forall{i \in I_r}}, \abs{\hat{\mu}^{(r)}_i - \mu_i} < \frac{\Delta_{\pi(n_{r + 1})}}{2}\right\}\,.
	\end{equation}
	The next lemma shows that $\E_a$ happens with a high probability when $T$ is large enough.  
	
	\begin{lemma}
		\begin{equation*}
			\Pr[\E_a] \ge 1 - 2n R \cdot \exp\left(\frac{-KT}{2 n^{1/R} R}\right)\,.
		\end{equation*}
	\end{lemma}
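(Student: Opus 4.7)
The plan is to apply the Chernoff--Hoeffding inequality to each (round, arm) pair and then union-bound over the $O(nR)$ such pairs, producing the bound that feeds directly into Theorem~\ref{thm:ub}.

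First I would fix $r \in \{0,\ldots,R-1\}$ and $i \in I_r$. Write $N_r = T_{r+1} - T_r$ and let $X^{(r,t)}_{i,k}$ be the outcome of the $t$-th pull of arm $i$ by agent $k$ in round $r$; these $N_r K$ random variables are mutually independent, with $\bE[X^{(r,t)}_{i,k}] = \mu_{i,k}$. Then
$$\hat{\mu}^{(r)}_i = \sum_{k=1}^K \alpha_k \hat{\mu}^{(r)}_{i,k} = \sum_{t=1}^{N_r}\sum_{k=1}^K \frac{\alpha_k}{N_r}\, X^{(r,t)}_{i,k},$$
which has mean $\sum_k \alpha_k \mu_{i,k} = \mu_i$ and is a sum of independent variables each lying in $[0,\alpha_k/N_r]$. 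The sum of squared ranges is $\sum_{t,k}(\alpha_k/N_r)^2 = (\sum_k \alpha_k^2)/N_r$, so Lemma~\ref{lem:chernoff} (two-sided version) with $t = \Delta_{\pi(n_{r+1})}/2$ gives the per-pair bound
$$\Pr\!\left[\bigl|\hat{\mu}^{(r)}_i - \mu_i\bigr| \ge \tfrac{\Delta_{\pi(n_{r+1})}}{2}\right] \le 2\exp\!\left(-\frac{N_r\, \Delta^2_{\pi(n_{r+1})}}{2\sum_k \alpha_k^2}\right).$$

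Next I would show that $N_r\,\Delta^2_{\pi(n_{r+1})} \ge T/(n^{1/R} R H)$ uniformly in $r$, combining two ingredients. For the gap-to-$H$ relation: since $\Delta_{\pi(j)}$ is nondecreasing in $j$ for $j \ge 2$, when $n_{r+1} \ge 2$ we have $H \ge \sum_{j=2}^{n_{r+1}} 1/\Delta^2_{\pi(j)} \ge (n_{r+1}-1)/\Delta^2_{\pi(n_{r+1})}$, while when $n_{r+1}=1$ (i.e., $r = R-1$) we use $\Delta_{\pi(1)} = \Delta_{\pi(2)}$ and $H \ge 1/\Delta^2_{\pi(2)}$; in either case $\Delta^2_{\pi(n_{r+1})} \ge \max(n_{r+1}-1,1)/H$. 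For the $N_r$ estimate, the definitions $T_r = \lfloor n^{r/R}T/(n^{1+1/R}R)\rfloor$ and $n_r = \lfloor n^{1-r/R}\rfloor$ give $N_r \approx n^{r/R}(n^{1/R}-1)T/(n^{1+1/R}R)$ and $n_{r+1}-1 \approx n^{1-(r+1)/R}$, and multiplying these telescopes the $r$-dependence, yielding $N_r \cdot \max(n_{r+1}-1,1) \ge T/(n^{1/R} R)$ after absorbing constants; the boundary case $r = R-1$ uses the singleton gap $\Delta^2_{\pi(1)} \ge 1/H$ together with the lower bound on $N_{R-1}$ itself. Multiplying the two ingredients gives the claimed inequality.

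Finally, a union bound over the at most $nR$ pairs $(r,i)$ with $i \in I_r$ yields
$$\Pr[\neg \E_a] \le 2nR \cdot \exp\!\left(-\frac{T}{2 n^{1/R} R H \sum_k \alpha_k^2}\right),$$
which is exactly what Theorem~\ref{thm:ub} needs (I retain $H$ in the exponent to match the theorem's statement). I expect the main obstacle to be the uniform lower bound on $N_r\cdot\max(n_{r+1}-1,1)$: one must carefully track the floors in $T_r$ and $n_r$, and verify both endpoints---$r=0$, where $T_0=0$ collapses $N_0$ to $T_1$, and $r=R-1$, where $n_R=1$ forces the use of $\Delta_{\pi(1)}^2\ge 1/H$ in place of the $(n_{r+1}-1)/H$ bound. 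Once these edge cases are handled, the remainder is a routine combination of Hoeffding with a union bound.
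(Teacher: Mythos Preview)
Your overall strategy---Hoeffding per $(r,i)$ pair plus a union bound over at most $nR$ pairs---is exactly the paper's. The one substantive difference is how you interpret $\hat{\mu}^{(r)}_i$: you compute it from only the $N_r = T_{r+1}-T_r$ pulls made during round $r$, whereas the paper takes it to be the empirical mean over the \emph{cumulative} $T_{r+1}$ pulls made in rounds $0,\ldots,r$ (an arm that survives to $I_r$ was in every earlier $I_{r'}$). With the cumulative count the key product is
\[
T_{r+1}\cdot n_{r+1} \;=\; \frac{n^{r/R}T}{nR}\cdot \frac{n}{n^{(r+1)/R}} \;=\; \frac{T}{n^{1/R}R}
\]
on the nose (ignoring floors), which gives the stated constant $2$ in the exponent directly. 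With your $N_r$ you acquire an extra factor $(n^{1/R}-1)/n^{1/R}$, which is not an absolute constant (it degenerates as $R$ grows relative to $\log n$) and in any event would not reproduce the precise bound stated; so ``absorbing constants'' is not quite legitimate here. Switching to the cumulative interpretation removes the floor headaches you flag at $r=0$ and $r=R-1$ and makes the argument one line.

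Two smaller points. The paper uses the slightly loose inequality $H \ge n_{r+1}/\Delta^2_{\pi(n_{r+1})}$ rather than your more careful $\max(n_{r+1}-1,1)/H$, and does not treat the $n_R=1$ case separately; your handling of those edges is cleaner. And you are right that the lemma as stated in the paper is missing the factor $H$ in the exponent---it appears in the theorem and in the last displayed line of the paper's own proof.
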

	
	\begin{proof}
		By a union bound, we can write
		\begin{eqnarray}
			\label{eq:ac-1}
			\Pr[\neg \E_a] & = & \Pr\left[\exists r \in \{0, \dotsc, R-1\}, \exists i \in I_r, \abs{\hat{\mu}^{(r)}_i - \mu_i} \ge \frac{\Delta_{\pi(n_{r + 1})}}{2}\right] \nonumber \\
			& \le & \sum_{r = 0}^{R - 1} \sum_{i = 1}^{n} \Pr\left[\abs{\hat{\mu}^{(r)}_i - \mu_i} \ge \frac{\Delta_{\pi(n_{r + 1})}}{2}\right]
		\end{eqnarray}
		
		Recall that
		\begin{equation*}
			\hat{\mu}^{(r)}_i = \frac{1}{T_{r + 1}} \sum_{t = 1}^{T_{r + 1}}\sum_{k = 1}^{K} \frac{1}{K} X_{k, i, t},
		\end{equation*}
		where $X_{k, i, t}$ are independent random variables with support $[0, 1]$ and mean $\mu_{i, k}$.
		We have
		\begin{eqnarray}
			\label{eq:ac-2}
			\bE\left[\hat{\mu}^{(r)}_i\right] = \frac{1}{T_{r + 1}} \sum_{t = 1}^{T_{r + 1}}\sum_{k = 1}^{K} \frac{1}{K} \bE[ X_{k, i, t}] = \frac{1}{T_{r + 1}} \sum_{t = 1}^{T_{r + 1}}\sum_{k = 1}^{K} \frac{1}{K} \mu_{i, k} = \mu_i\,.
		\end{eqnarray}
		Using Chernoff-Hoeffding bound (Lemma~\ref{lem:chernoff}) for random variables $(\frac{1}{K}X_{i, k, t})$, we have
		\begin{equation}
			\label{eq:ac-3}
			\Pr\left[\abs{\sum_{t = 1}^{T_{r + 1}} \sum_{k = 1}^K \frac{1}{K}  X_{i, k, t} - \mu_i T_{r + 1}} \ge t\right] \le 2 \exp\left(\frac{-2t^2 K}{ T_{r + 1}}\right)\,.
		\end{equation}
		Setting $t = \frac{\Delta_{\pi(n_{r + 1})} T_{r + 1}}{2}$ and $T_{r + 1} = \frac{n^{r / R}  T}{n R}$ in (\ref{eq:ac-3}), we have
		\begin{eqnarray}
			\Pr\left[\abs{\hat{\mu}^{(r)}_{i} - \mu_i} \ge \frac{\Delta_{\pi(n_{r + 1})}}{2} \right]  & \le & 2 \exp\left(\frac{-2 \Delta^2_{\pi(n_{r + 1})} T_{r + 1} K}{4}\right) \nonumber \\
			& \le & 2 \exp\left(\frac{-\Delta^2_{\pi(n_{r + 1})} n^{r / R} T K}{2 n R }\right) \label{eq:ad-1} \\
			& \le & 2 \exp \left(\frac{-n^{r / R} T K}{2 n R } \cdot \frac{n_{r + 1}}{H}\right) \label{eq:ad-2} \\
			& \le & 2 \exp\left(\frac{-n^{r / R} T K}{2 n R } \cdot \frac{n}{n^{r/R}n^{1/R}H}\right) \nonumber\\
			& \le & 2 \exp\left(\frac{-T K}{2 H n^{1/R} R }\right). \label{eq:ad-3}
		\end{eqnarray}
		where from (\ref{eq:ad-1}) to (\ref{eq:ad-2}) we have used
		\begin{equation*}
			\forall{i \in [n]} : H \ge \frac{i}{\Delta^2_{\pi(i)}} .
		\end{equation*}
		
		Plugging~\eqref{eq:ad-3} to~\eqref{eq:ac-1}, we have
		\begin{eqnarray*}
			\Pr[\neg \E_a] & \le & \sum_{r = 0}^{R - 1} \sum_{i = 1}^{n} \Pr\left[\abs{\hat{\mu}^{(r)}_i - \mu_i} \ge \frac{\Delta_{\pi(n_{r + 1})}}{2}\right] \nonumber \\
			& \le & 2 n R \exp\left(\frac{-TK}{2 H n^{1/R} R }\right).
		\end{eqnarray*}
	\end{proof}

	We next show that if $\E_a$ holds, then Algorithm~\ref{alg:ub} outputs the best arm.  Let $i_*$ be the arm with the highest mean. We prove this by showing that for any $r = 0, 1, \ldots, R$, $i_* \in I_r$ ($I_r$ is the set of arms ``survive" in the $r$-th round).  For $r = 0$, the statement holds trivially.   In the $r$-th iteration of Algorithm~\ref{alg:ub}, each agent pulls all arms in $I_r$ for $T_{r+1}$ times.   Consider the subset of arms $Q \subseteq I_r$ of size $(n_{r} - n_{r + 1})$ with the lowest means. For each $i \in Q$, we have $\mu_{i} \le \mu_{\pi(n_{r + 1})}$. Consequently,
	\begin{equation}
		\label{eq:ae-1}
		\forall{i \in Q} : \mu_{i_*} \ge \mu_i + \Delta_{\pi(n_{r + 1})}.
	\end{equation}
	
	By $\E_a$ and~\eqref{eq:ae-1}, we have
	
	\begin{equation*}
		\label{eq:ae-2}
		\forall{i \in Q}: \hat{\mu}_{i_*}^{(r)} > \mu_i + \Delta_{\pi(n_{r + 1})} - \frac{\Delta_{\pi(n_{r + 1})}}{2}  > \hat{\mu}^{(r)}_i,
	\end{equation*}
	which implies that $i_*$ belongs to the set of $n_{r + 1} (= n_r - \abs{Q})$ with the highest means; in other words, $i_* \in I_{r + 1}$. Consequently, we have that if $\E_a$ holds, then the algorithm outputs the correct answer.
\else
\fi

\begin{remark}
	\label{rem:ub-cc}
	We note that the total messages exchanged between the agents in Algorithm~\ref{alg:ub} is $O(nK)$ words, which is optimal (up to a logarithmic factor) based on a lower bound result in \cite{KZ23}.
\end{remark}

\balance

\end{document}